\numberwithin{equation}{section} 
\newtheorem{theorem}{\sc Theorem}
\newtheorem{lemma}{\sc Lemma}
\newtheorem{coro}{\sc Corollary}
\newtheorem{req}{\sc Requirement}
\newtheorem{defin}{\sc Definition}
\newtheorem{rem}{\sc Remark}
\newtheorem{cla}{\sc Claim}
\newtheorem{ex}{\sc Example}
\newenvironment{remark}{\begin{rem}}{\hspace*{\fill}$\Diamond$\end{rem}}
\newenvironment{example}{\begin{ex}}{\hspace*{\fill}$\diamondsuit$\end{ex}}
\newenvironment{claim}{\begin{cla}}{\end{cla}}
\newenvironment{definition}{\begin{defin}}{\end{defin}}
\renewcommand{\emptyset}{\varnothing}
\begin{document}

\title{The cluster structure function}
\author{Andrew R. Cohen\thanks
{Andrew Cohen is with the Department of Electrical and Computer Engineering,
Drexel University.
Address: A.R. Cohen, 3120--40 Market Street,
Suite 313, Philadelphia, PA 19104,
USA. Email: {\tt andrew.r.cohen@drexel.edu}}
and Paul M.B. Vit\'{a}nyi
\thanks{
Paul Vit\'{a}nyi is with the National Research Center for Mathematics
and Computer Science in the Netherlands (CWI),
and the University of Amsterdam.
Address:
CWI, Science Park 123,
1098XG Amsterdam, The Netherlands.
Email: {\tt Paul.Vitanyi@cwi.nl}.
}}

\maketitle
\begin{abstract}

For each partition of a data set into a given number of 
parts there is a partition
such that every part is as much as possible a good model
(an ``algorithmic sufficient statistic'') for the data in that part.
Since this can be done for every number between one and
the number of data, the result is a function,
{\em the cluster structure function}. It maps the number of
parts of a partition to values related to the deficiencies of 
being good models by the parts. Such a function starts
with a value at least zero for no partition of the data set
and descents to zero for the partition of the data set
into singleton parts. The optimal clustering is the one selected by analyzing 
the cluster structure function. The theory behind the method is expressed in algorithmic information 
theory (Kolmogorov complexity). In practice
the Kolmogorov complexities involved are approximated by 
a concrete compressor. We give examples using real data sets:
the MNIST handwritten digits and the segmentation of real cells 
as used in stem cell research.

{\em Index Terms}---
Cluster, similarity, classification, Kolmogorov complexity,
algorithmic sufficient statistic, pattern recognition, data mining.
\end{abstract}

\section{Introduction}

The aim of this work is to introduce the cluster structure function and apply it to
propose a method for finding the number of clusters in
a given dataset that is unsupervised, feasible, justifiable an terms of its theory, and
more accurate than previous methods for this task. \color{black}
Clustering is a fundamental task in unsupervised learning, partitioning a set of objects into groups called clusters such that  
objects in the same cluster are more similar to each other than to those in other groups
\cite{TK08}. Every object in a computer is represented by a finite sequence 
of 0's and 1's: a
finite binary string (abbreviated to ``string'' in the sequel).
There are many methods and algorithms for clustering and determining 
the number of clusters in data as for example
surveyed in \cite{An14,JD88,KR09,TK08}. We explore a new method for determining
the number of clusters  based on
Kolmogorov's notion of algorithmic sufficient statistic
\cite{Sh83,CT91} which is expressed in terms of 
Kolmogorov complexity \cite{Ko65}. For technical reasons we use
{\em prefix Kolmogorov complexity} \cite{Le74}.
In the sequel we also use $K$ for the number of clusters in the data, agreeing
with customary use. Confusion is avoided by the context. 
 
A brief overview of the needed notions is given here.
Details and proofs can be found in the textbook \cite{LV08}.
A prefix Turing machine is a Turing machine (we use a binary alphabet)  
such that the set of input programs for which the machine
halts is a prefix code (no input program is a proper prefix of another one). The prefix Turing
machines can be computationally enumerated $T_1,T_2, \ldots$ and this list has a universal
prefix Turing machine   $U$ such that $U(i,p)=T_i(p)$ for all integers $i$
and halting programs $p$ for $T_i$.
Formally, the {\em conditional prefix Kolmogorov complexity}
$K(x|y)$ is the length of the shortest input string  $z$
such that the reference universal prefix Turing machine $U$ on input $z$ with
auxiliary information $y$ outputs $x$. The
{\em unconditional prefix  Kolmogorov complexity} $K(x)$ is defined as
$K(x|\epsilon)$ where $\epsilon$ is the empty string.
The quantity $K(x)$ is the length  of a shortest binary string $x^*$
from which $x$ can be effectively reconstructed. 
If there are more than one candidates for $x^*$  we use the first one in the enumeration.
The string $x^*$ accounts for every effective regularity in $x$.
In these definitions both $x$ and $y$ can consist of strings into which
finite multisets of finite binary strings are encoded. \color{black}

Informally, a finite set $A$ of strings containing $x$ is an
{\em algorithmic sufficient statistic} for $x$ iff $K(A)+\log |A|=K(x)$.
That is, the encoding of $x$ by giving $A$ (a model) and 
the index of $x$ in $A$ is as short as a shortest computer program for $x$
(sometimes one adds also a small value).
This means that $A$ is a good model for $x$ \cite{VV04}. 
As we show in Lemma~\ref{lem.sufficiency} it is impossible that  $A$ is such a good
model for all $y \in A$.
Therefore we have to relax the condition of sufficiency.
If the equality above holds up to some additive term then this term is called 
the {\em optimality deficiency}. 
We propose to group
the elements from a data set (a multiset) into clusters (submultisets) such
that the optimality deficiencies in every cluster are minimal 
in some sense.
This seems to require a specification of the number of clusters. 
However, the aim is
to find the number of clusters. To solve this conundrum the proposed method 
proceeds as follows.
The cluster structure function has
the number of clusters as argument and a quantity involving the
optimality deficiencies as value. Such a function
decreases to 0 when the number of clusters grows 
to the cardinality of the data set. The optimal number of clusters can then 
be selected related to the cluster structure function.

We give the definitions and the ideal method of application in 
Section~\ref{sect.csf}. Proofs are deferred to Section~\ref{sect.proofs}. An explanation of the probability  relations 
of members of a cluster is given in Section~\ref{sect.meaning}. A brief survey of related literature is given in Section~\ref{sect.rlit}. Finally, Section~\ref{sect.eappl} shows examples of real applications including estimating the number of unique digits in a set of MNIST handwritten digits and an ensemble segmentation approach to human stem cell nuclear segmentation.

\section{Theory of the cluster structure function}\label{sect.csf}

The aim is to partition a multiset into submultisets 
such that each submultiset
constitutes a cluster. In probabilistic statistics the relevant
notion is  the ``sufficient statistic'' due to R.J. Fisher \cite{Fi22,CT91}. 
According to Fisher:
\begin{quote}
       ``The statistic chosen should summarise the whole of the relevant
information supplied by the sample. This may be called
       the Criterion of Sufficiency $\ldots$
In the case of the normal curve
of distribution it is evident that the second moment is a
       sufficient statistic for estimating the standard deviation.''
\end{quote}
This type of sufficient statistic pertains to probability distributions.
In the problem at hand the data are individual strings. 
Therefore the probabilistic notion is not appropriate. 
For individual strings the analogous notion is the 
``algorithmic sufficient statistic.''
For convenience we delete the adjective "algorithmic" in the sequel 
(probabilistic sufficient statistic doesn't occur in the sequel).
We equate a multiset being a cluster with the multiset being, 
as close as possible according to a given criterion, an (algorithmic) 
sufficient statistic for the members of the cluster. 
The new method partitions $S$ such that
each resulting part is as close as possible to the given criterion 
a sufficient statistic for all of its members. Therefore they are good models 
for its members \cite{VV04}. This is different from existing methods which use 
some metric which does not say much about this aspect. 

\begin{definition}
\rm
A multiset $A$  of strings is an {\em algorithmic sufficient statistic}
abbreviated as {\em sufficient statistic} for a element $x \in A$
if $K(A)+\log |A|=K(x)$. 
\end{definition}
Here $A$ is a model and the 
$\log |A|$ term allows us to pinpoint $x$ in $A$.
Therefore, every $y \in A$ satisfies 
$K(A)+\log |A| \geq K(y)$. Reference \cite{GTV01} tells us 
that if $A$ is a sufficient statistic for the string $x$ then $K(A|x)=O(1)$. That is,
$A$ is almost completely determined by $x$.
If $A$ is a sufficient statistic for $x$, then
$K(x|A) = \log |A|$. Namely, 
$K(x) \leq K(A)+K(x|A) 
\leq K(A)+\log |A|=K(x)$. We call  $x$ a 
{\em typical} member of $A$. 

This is akin to the minimum description length (MDL) principle 
in Statistics \cite{mdl}.
To illustrate, if the length of a binary string  $x$ is $n$ and  
$K(x)=n+K(n)$ (the maximum) which means
that $x$ is random then $A=\{x\}$ is a
sufficient statistic of $x$ (the minimal one) and $A=\{0,1\}^n$ is 
also a sufficient statistic of $x$ (the maximal one).
There is a tradeoff between the cardinality of a sufficient statistic  $A$ of a string $x$
and the amount of effective regularities in the
string $x$ it represents.  The greater the cardinality of $A$ is the smaller is 
$K(A)$ which is the amount of effective
regularities it represents.
The multiset $A$ accounts for as many effective regularities in $x$ as is possible for
a set of the cardinality of $A$.
This  means that $A$ is the model of best fit, which we call the best model,
for $x$ which is possible  \cite[Section IV-B]{VV04}.
Thus, if $A$ has the property that for every $y \in A$ it is as much as possible a sufficient statistic,
then all members of $A$ share as many effective regularities as is possible. All the $y \in A$ 
are similar in the sense of \cite{LV01,CV04}. We cluster the data
according to this criterium. \color{black}

If $A$ contains elements $y$ such that $K(A)+\log |A| > K(y)$ (trivially
$<$ is impossible) then $K(A|y) \neq O(1)$. Let us look closer at
what this implies and consider $A$ containing only elements
of length $n$. Then by the symmetry of information \cite{Ga74}
we have $K(A|x)=K(A)+K(x|A)-K(x)+O(\log n)$.
For example, let $A$ be the set containing
all integers in an interval with complex endpoints and $x$ 
an integer in this interval of low complexity.
For example $K(A) = \Omega(n)$ and $K(x)=o(n/4)$.
Therefore $K(x|A) =o(n/4)$ and this yields
$K(A|x) = \Omega(n)$. That is, $A$ is not at all determined by $x$. 
\begin{definition}\label{def.delta}
\rm
The {\em optimality deficiency} of $A$ as a 
{\em sufficient statistic} for $x \in A$ is
\begin{equation}\label{eq.delta}
\delta(A,x)= K(A)+ \log |A| -K(x). 
\end{equation}
The {\em mean} of the optimality deficiencies of a set $A$ is
\[
\mu_{A} = \frac{1}{|A|} \sum_{x \in A} \delta(A,x).
\]
\end{definition}
 
Here $\delta(A,x) \geq 0$ with equality for a proper sufficient statistic.
If $\mu_A=0$ then $\delta (A,x)=0$ for all $x \in A$, that is,
$A$ is a sufficient statistic for all of its elements. But this is not
possible for $|A| \geq 2$ by the following lemma.

\begin{lemma}\label{lem.sufficiency}
Let $A$ be a finite multiset of strings of length $n$.

{\rm (i)} Let $\delta(A,x)=0$ for some $x \in A$. 
For all $y \in A$ holds $K(y) \leq K(x)$ and if $|A|>2$ then
$K(y) < K(x)$ for some $y \in A$, $\delta(A,y) > 0$,
and $\mu_A > 0$.
 
{\rm (ii)} There exist $A$ and $x \in A$ such that
$\delta(A,x) < 0$ and for such $A$ no $y \in A$ satisfies $\delta(A,y)=0$.
\end{lemma}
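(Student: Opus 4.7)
The plan is to deduce both parts directly from the trivial bound $K(y)\le K(A)+\log|A|+O(1)$, which holds for every $y$ in a finite multiset $A$ by describing $y$ via the model $A$ together with its index in a fixed enumeration. In the convention used in the excerpt the additive $O(1)$ is absorbed into the equation $K(A)+\log|A|=K(x)$ defining sufficiency, and the two parts of the lemma exploit this bound in complementary ways.

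For part (i), assume $\delta(A,x)=0$, i.e.\ $K(A)+\log|A|=K(x)$. The trivial bound applied to any $y\in A$ gives $K(y)\le K(A)+\log|A|=K(x)$, which is the first assertion and in particular yields $\delta(A,y)=K(x)-K(y)\ge 0$ on all of $A$. To force a strict inequality for some $y$, I would single out the canonically first element $y_0$ of $A$ under a fixed enumeration: given $A$ as input, $y_0$ is producible with only $O(1)$ extra bits, so $K(y_0)\le K(A)+O(1)$. Once $|A|>2$, $\log|A|$ strictly exceeds the absorbed constant, so $K(y_0)<K(A)+\log|A|=K(x)$ and therefore $\delta(A,y_0)>0$. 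Combined with $\delta(A,\cdot)\ge 0$ on the remaining elements, averaging gives $\mu_A>0$.

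For part (ii), the plan is to exhibit an $A$ whose deficiency lies inside the $O(1)$ slack of the trivial bound. The simplest choice is a singleton $A=\{r\}$: here $\log|A|=0$, so $\delta(A,r)=K(\{r\})-K(r)$, and the claim reduces to the existence of a string $r$ with $K(\{r\})<K(r)$. Such $r$ exists because the mutually computable translations $r\leftrightarrow\{r\}$ only guarantee $|K(\{r\})-K(r)|=O(1)$, and a Kraft / counting argument over all finite binary strings forces some to fall on the strict side $K(\{r\})<K(r)$: otherwise $r\mapsto\{r\}$ would give a uniformly non-expanding prefix code for strings that is strictly shorter than an optimal one on no input, contradicting optimality at the level of additive constants. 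For this $A$ the single element $r$ has $\delta(A,r)<0$, and since $r$ is the only element, no $y\in A$ satisfies $\delta(A,y)=0$.

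The hard part in both parts is bookkeeping the hidden $O(1)$ constants. In (i) the strict inequality only kicks in once $\log|A|$ beats the constant cost of extracting the canonically first element of $A$, which is exactly why the threshold is $|A|>2$ rather than $|A|>1$; the edge case $|A|=2$ genuinely admits $\delta(A,x)=\delta(A,y)=0$ simultaneously. In (ii) the entire conclusion $\delta(A,r)<0$ lives inside the $O(1)$ slack of $K(x)\le K(A)+\log|A|+O(1)$, so the claim is really an existence statement about which strings sit on the strict side of that tight inequality, rather than about any asymptotic gap.
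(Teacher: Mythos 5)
Your part (i) is correct and follows the paper's own argument essentially verbatim: the trivial bound $K(y)\le K(A)+\log|A|$ gives $K(y)\le K(x)$ and $\delta(A,y)\ge 0$ for all $y$, and the canonically first element $y_0$ of $A$, having $K(y_0)\le K(A)+O(1)$, supplies the strict inequality and hence $\mu_A>0$ once $\log|A|$ exceeds the constant. Nothing to object to there.

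Part (ii) has a genuine gap. Your construction reduces the claim to the existence of a string $r$ with $K(\{r\})<K(r)$ strictly, but the argument you offer for that existence does not work. If $K(\{r\})\ge K(r)$ held for every $r$, the injection $r\mapsto\{r\}$ would give a prefix code for strings that is \emph{never shorter} than optimal; Kraft's inequality is then satisfied a fortiori and nothing is contradicted. (A contradiction would arise from a code uniformly \emph{shorter} than $K$, which is the opposite of what you are assuming.) In fact whether any $r$ satisfies $K(\{r\})<K(r)$ depends entirely on the reference machine and the chosen encoding of finite sets as strings --- since $|K(\{r\})-K(r)|=O(1)$, the sign of the difference lives inside exactly the $O(1)$ additive terms the paper declares it will ignore, and under that convention your singleton gives $\delta(\{r\},r)=0$, not $\delta<0$. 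The paper's own witness is structurally different: it takes $A$ to be a long interval of $n$-bit strings with endpoints of complexity $O(\log n)$, so that $K(A)+\log|A|$ is essentially $\log|A|+O(\log n)$, and picks $x$ of near-maximal complexity in $A$; the negativity then comes from $K(x)$ overshooting $K(A)+\log|A|$ rather than from which side of an encoding constant a single string happens to fall. A secondary point: the lemma's clause ``for such $A$ no $y\in A$ satisfies $\delta(A,y)=0$'' is meant for \emph{any} $A$ admitting an element of negative deficiency, and the paper gets it by contraposition of part (i); you only verify it for your singleton, where it is vacuous.
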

\begin{remark}
\rm
The optimality deficiency should not be confused with
the {\em randomness deficiency} of 
$x \in A$ with respect to $A$:
\[
\delta(x|A)= \log |A| - K(x|A).
\]
By the symmetry of information law $K(A)+K(x|A)=K(x)+K(A|x)$ up to a 
logarithmic additive term $O( \log K(A))$. Therefore
$\delta(x|A)+K(A|x)= \log |A| +K(A)-K(x)+O(\log K(A))$ and hence
$\delta(A,x) = \delta(x|A)+K(A|x)+O(\log K(A))$.
\end{remark}

For clustering we want
ideally the model to be a sufficient statistic for all elements in it.
But we have to deal
with optimality deficiencies which are greater than 0, and 
with real data typically
they are all greater than 0. There are many ways to combine the optimality 
deficiencies (or other aspects) to obtain criteria for selection. This 
is formulated in the criterion function as follows.

Let ${\cal N}$ denote the natural numbers and
$S= \{x_1, \ldots , x_n\}$ be a finite nonempty multiset of strings. 
Consider a partition $\pi$ of $S$ into $k$ nonempty subsets
$S_1, \ldots , S_k$ such that $\bigcup_{i=1}^k S_i =S$ and
$S= \{x_1, \ldots , x_n\}$ be a finite nonempty multiset of strings. 
Consider a partition $\pi$ of $S$ into $k$ nonempty subsets
$S_1, \ldots , S_k$ such that $\bigcup_{i=1}^k S_i =S$ and
$S_i \bigcap S_j= \emptyset$ for $i \neq j$. Denote the set of partitions
of $S$ into $k$ submultisets by $\Pi_k$ and the set of all partitions by $\Pi$.
The {\em criterion function} $f: \Pi \rightarrow {\cal N}$ takes as argument
a partition $\pi \in \Pi$ of $S$ and as value a natural number 
computed from the optimality deficiencies 
involved in the partition subject to the following:
(i) the value of $f(\pi)$ does not increase if one or more
optimality deficiencies are changed to 0; and (ii) $f(\pi)=0$ if 
all optimality deficiencies are 0. (One can use other aspects as well.)
\begin{definition}\label{def.H}
\rm
The {\em Cluster Structure Function} (CSF) 
\footnote{The cluster structure function is named in analogy with 
the Kolmogorov structure function $h_x: {\cal N} \rightarrow {\cal N}$ 
defined by
$h_x(k)= \min_{S \subseteq \{0,1\}^*} \{\log |S|: x \in S, \; K(S) \leq k \}$
associated with a binary finite string $x$ \cite{VV04}.} 
for a multiset $S$ of $n$ strings is defined by 
\begin{equation}\label{eq.H}
H^{f}_S(k)= \min_{\pi \in \Pi_k} f(\pi)
\end{equation}
where $f$ is the criterion function, for each $k$ ($1 \leq k \leq n$).
The graph of this function is called the {\em CSF curve}.
If $f$ is understood we may write $H_S$ for the CSF function.
%

\end{definition}

\begin{example}
\rm
Let $\pi \in \Pi_k$. The {\em bandwidth} of $S_i$ is
$b_i =\max_{x\in S_i} \{\delta(S_i,x)\} - \min_{x\in S_i} \{\delta(S_i,x)\}$.
Define $f(\pi) = \min \sum_{1 \leq i \leq k} b_i$.
For every $k$ ($1 \leq k \leq n$) the value $H^{f}_S(k)$ 
is based on the partition $\pi \in \Pi_k$ that minimizes the minimal sum of 
of the bandwidths of the parts in a $k$-partition of $S$. 
If we consider the graph of $H^{f}_S$ in a two-dimensional 
plane with the horizontal axis
denoting the number $k$ of parts of $S$, and the vertical
axis denoting the value of $H^{f}_S$, then left of the graph of 
$H^{f}_S$ there are no possible
$k$-partitions while right of the graph of $H^{f}_S$ there are redundant 
$k$-partitions. On the graph of $H^{f}_S$ occur the witness partitions.
\end{example}

\begin{remark}
\rm
By Lemma~\ref{lem.sufficiency} 
parts $A$ with $|A| >2$ of a witness partition $\pi$ of $S$ 
can not be a sufficient statistic for all of its elements and 
therefore $f(\pi) >0$. 
\end{remark}
\begin{remark}
\rm
In clusters the
members of a cluster typically share some characteristics but not 
all characteristics. It turns out that the members of a cluster are 
probabilistically close, Section~\ref{sect.meaning}. 
\end{remark}

\subsection{Properties}
It is convenient to ignore possible $O(1)$ additive terms in the sequel.
\begin{lemma}\label{lemma.1}
Let $S =\{x_1, \ldots , x_n \}$ with $n \geq 1$.
For every $f$ we have $H^{f}_S(n)=0$
and $H^{f}_S$ is monotonic non-increasing with increasing arguments 
on its domain $[1,n]$.
\end{lemma}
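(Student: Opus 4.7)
The plan is to establish the two conclusions of the lemma separately.

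For $H^f_S(n)=0$, I would argue directly from the definition: $\Pi_n$ contains exactly the singleton partition $\pi_0=(\{x_1\},\ldots,\{x_n\})$, and for any $x_i$ the basic relation $K(\{x_i\})=K(x_i)+O(1)$ gives
\[
\delta(\{x_i\},x_i)=K(\{x_i\})+\log 1 -K(x_i)=0
\]
under the paper's convention of absorbing $O(1)$ additive terms. Since every optimality deficiency of $\pi_0$ vanishes, condition (ii) on the criterion function yields $f(\pi_0)=0$, hence $H^f_S(n)=\min_{\pi\in\Pi_n}f(\pi)=0$.

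For the monotonicity claim I would aim to show $H^f_S(k+1)\leq H^f_S(k)$ for every $1\leq k<n$ by producing a cheap refinement of a $k$-minimizer. Fix $\pi^*=(S_1,\ldots,S_k)\in\Pi_k$ attaining $f(\pi^*)=H^f_S(k)$; since $k<n=|S|$, the pigeonhole principle forces some part $S_j$ to contain at least two members. Choosing $y\in S_j$, I form the refinement
\[
\pi'=(S_1,\ldots,S_{j-1},S_j\setminus\{y\},\{y\},S_{j+1},\ldots,S_k)\in\Pi_{k+1}.
\]
The newly created singleton $\{y\}$ has deficiency $0$ by the preceding calculation, and all parts with index $i\neq j$ are untouched. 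Interpreting condition (i) as saying that driving a deficiency to $0$ cannot raise $f$, this yields $f(\pi')\leq f(\pi^*)$, and chaining
\[
H^f_S(k+1)\leq f(\pi')\leq f(\pi^*)=H^f_S(k)
\]
closes the step.

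The genuine obstacle is making $f(\pi')\leq f(\pi^*)$ rigorous, because shrinking $S_j$ to $S_j\setminus\{y\}$ replaces the model used by the remaining members of that part, so their deficiencies $\delta(S_j,x)$ mutate to $\delta(S_j\setminus\{y\},x)$ rather than remaining fixed, and this mutation is not directly controlled by (i) or (ii). I would sidestep this by taking the extremal refinement in which the chosen part $S_j$ is atomized into $|S_j|$ singletons at once: every deficiency inside $S_j$ becomes $0$ while all deficiencies outside $S_j$ stay exactly as in $\pi^*$, so condition (i) applies cleanly and gives $H^f_S(k+|S_j|-1)\leq H^f_S(k)$. Step-by-step monotonicity on $[1,n]$ can then be recovered by a downward induction anchored at $H^f_S(n)=0$, applying the same atomization argument to successive minimizers to fill in the intermediate indices.
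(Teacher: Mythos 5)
Your treatment of $H^f_S(n)=0$ and your first pass at monotonicity (split a singleton off a part of a $k$-minimizer and invoke condition (i)) are exactly the paper's argument, and the obstacle you then flag is real: passing from $S_j$ to $S_j\setminus\{y\}$ replaces the model for the surviving elements, and $K(S_j\setminus\{y\})$ can greatly exceed $K(S_j)$ (take $S_j$ simple and $y$ a random member), so the deficiencies $\delta(S_j\setminus\{y\},x)$ are not obtained from $\delta(S_j,x)$ by zeroing entries and condition (i) says nothing about them. The paper's own proof is silent on this point, so you have correctly isolated the one place where the argument needs work.

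However, your repair does not close the gap. Atomizing a whole part $S_j$ of the $k$-minimizer cleanly yields $f(\pi')\leq f(\pi^*)$ and hence $H^f_S\bigl(k+|S_j|-1\bigr)\leq H^f_S(k)$, but this only controls $H^f_S$ along a sparse set of indices: if every non-singleton part of the minimizer has $|S_j|\geq 3$, no $(k+1)$-partition is reached. The proposed downward induction anchored at $H^f_S(n)=0$ cannot fill in the intermediate indices, because the inductive hypothesis (monotonicity on $[k+1,n]$) gives $H^f_S(k+1)\geq H^f_S(k+|S_j|-1)$, which combined with $H^f_S(k+|S_j|-1)\leq H^f_S(k)$ points the wrong way and leaves $H^f_S(k+1)$ unbounded above. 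To prove $H^f_S(k+1)\leq H^f_S(k)$ for consecutive arguments you either need a further property of $f$ (e.g., monotonicity under decreasing any deficiency together with an argument that some single split does not raise the surviving deficiencies, as happens for the bandwidth criterion where $\delta$-ranges only shrink under splitting), or you must verify the step directly for the concrete $f$ in use; conditions (i) and (ii) alone do not deliver it by the route you describe.
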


The graph of $H^f_S$ descends until $H_S^{f}(k)=0$ for the least $k \leq n$,
$H^{f}_S(k) = \cdots = H^{f}_S(n) =0$.
We give a lower bound on $H^{f}$ for some datasets $S$.
\begin{lemma}\label{lemma.lb}

There exist $S \subseteq \{0,1\}^m$ with $|S|=n$ and $n \leq m$ such that 
$H_S^{f}(k)=0$ for all $1 \leq k \leq n$ up to an additive term
of $O(\log K(S))$.
\end{lemma}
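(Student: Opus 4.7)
The plan is to exhibit a single $S\subseteq\{0,1\}^m$ of $n$ distinct strings whose elements all have nearly the same Kolmogorov complexity, so that for each $k\in[1,n]$ a canonical $k$-partition makes every part an almost-sufficient statistic for each of its elements, with all optimality deficiencies inside each part confined to an $O(\log K(S))$-wide interval. For any criterion function $f$ obeying (i)--(ii) that is monotone in the magnitudes of the deficiencies (such as the bandwidth-sum of the preceding Example), this gives $f(\pi_k)=O(\log K(S))$ and therefore $H_S^f(k)\leq O(\log K(S))$.

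\emph{Construction.} Fix a Kolmogorov-random string $w\in\{0,1\}^{m-\lceil\log n\rceil}$ and, for $i=0,\dots,n-1$, let $s_i=B_i w$, where $B_i$ is the $\lceil\log n\rceil$-bit binary encoding of $i$. Set $S=\{s_0,\dots,s_{n-1}\}$. Since $w$ is the common length-$(m-\lceil\log n\rceil)$ suffix of every element and $(w,n)$ generates $S$, one gets $K(S)=K(w)+O(\log m)=m-\log n+O(\log m)$, so $\log K(S)=\Theta(\log m)$ and $\log n=O(\log K(S))$.

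For each $1\leq k\leq n$, let $\pi_k$ be the partition of $S$ into $k$ consecutive blocks $S_{k,1},\dots,S_{k,k}$ of sizes $\lfloor n/k\rfloor$ or $\lceil n/k\rceil$. Each block is generated from $(w,n,k,j)$, so $K(S_{k,j})\leq m-\log n+O(\log m)$. The lower bound $\delta(S_{k,j},s_i)\geq -O(1)$ is immediate from $K(s_i)\leq K(S_{k,j})+\log|S_{k,j}|+O(1)$. For the upper bound, the suffix $w$ can be read off $s_i$, hence $K(S_{k,j}\mid s_i)\leq O(\log m)$; applying the symmetry of information
\[
K(s_i)+K(S_{k,j}\mid s_i)=K(S_{k,j})+K(s_i\mid S_{k,j})+O(\log K(S))
\]
together with $K(s_i\mid S_{k,j})\leq\log|S_{k,j}|+O(1)$ yields
\[
\delta(S_{k,j},s_i)\leq\log|S_{k,j}|-K(s_i\mid S_{k,j})+O(\log K(S))=O(\log K(S)),
\]
using $\log|S_{k,j}|\leq\log n=O(\log K(S))$.

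Thus every deficiency in every part of every $\pi_k$ lies in an $O(\log K(S))$-wide interval, which under (i)--(ii) forces $f(\pi_k)=O(\log K(S))$ and hence $H_S^f(k)\leq O(\log K(S))$ for all $k$. The main technical obstacle is the symmetry-of-information step: one must check that the various $O(\log)$ correction terms coming from the prefix-free encodings bundle into a single $O(\log K(S))$ slack, uniformly in $i,j,k$. The choice of a maximally random suffix $w$ of length $m-\lceil\log n\rceil$ is precisely what keeps $K(S)$ of order $m$, big enough for $\log K(S)$ to dominate $\log n$ and absorb those corrections; a shorter or less random $w$ would cause the bound to degrade with $k$.
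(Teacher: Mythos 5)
Your proof is correct and follows essentially the same strategy as the paper's: build $S$ from $n$ minor variants of one string so that each element determines the whole set up to $O(\log K(S))$ bits, then use symmetry of information (or plain subadditivity) to bound every optimality deficiency in every part by $O(\log K(S))$. The only difference is the concrete construction --- a common random suffix $w$ with $\lceil\log n\rceil$-bit index prefixes rather than the paper's single-bit flips of a fixed $x$ --- which is immaterial to the argument.
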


The following lemma establishes that there are sets $S$ of $n$ elements
such that $H_S^{f}$ stays at a high value for arguments 
$1, \ldots ,n-1$ and drops suddenly to 0 for argument $n$. 

\begin{lemma}\label{lemma.ub}
There exists a set $S \subseteq \{0,1\}^m$ and $|S|=n$
with $m$ a sufficiently large multiple of $n$ such that 
$H^{f}_S(k) \geq m/n$ for 
$1 \leq k \leq n-1$ and $H^{f}_S(n)=0$. 
\end{lemma}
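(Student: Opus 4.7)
The plan is to build $S$ from $n$ binary strings of common length $m$ whose Kolmogorov complexities are separated by at least $m/n$, so that pooling any two of them into the same cluster forces an optimality deficiency of at least $m/n$ on the element of smaller complexity. Concretely, for each $i=1,\ldots,n$ pick a Kolmogorov random string $y_i\in\{0,1\}^{im/n}$ (chosen so the $x_i$ below are distinct) and set $x_i=y_i\,0^{(n-i)m/n}$, so that $|x_i|=m$ and $K(x_i)=im/n$ up to an additive $O(\log m)$ term; the logarithmic slack comes from encoding the truncation length in both directions of the inequality. Choose $m$ as a sufficiently large multiple of $n$ so this slack is absorbed into the target bound $m/n$. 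Set $S=\{x_1,\ldots,x_n\}$.

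For $k=n$ the only partition $\pi$ of $S$ is into singletons, and $\delta(\{x_i\},x_i)=K(\{x_i\})-K(x_i)=O(1)$. All optimality deficiencies vanish (up to $O(1)$), so criterion-function property (ii) gives $H^f_S(n)=0$.

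For $1\leq k\leq n-1$ the pigeonhole principle forces any $\pi\in\Pi_k$ to contain a part $A$ with $|A|\geq 2$. Pick $x_i,x_j\in A$ with $i<j$. Since $x_j$ is recoverable from $A$ together with its $\log|A|$-bit index, $K(x_j)\leq K(A)+\log|A|$, whence
\[
\delta(A,x_i)=K(A)+\log|A|-K(x_i)\;\geq\; K(x_j)-K(x_i)\;\geq\;\tfrac{m}{n}.
\]
For the sum-of-bandwidths criterion of the example following Definition~\ref{def.H}, the same inequality gives $b_A=\max_{x\in A}K(x)-\min_{x\in A}K(x)\geq m/n$, so $f(\pi)=\sum_i b_i\geq m/n$. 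Minimising over $\pi\in\Pi_k$ yields $H^f_S(k)\geq m/n$.

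The main obstacle is the construction phase: ensuring the lower bound $K(x_i)\geq im/n$ while holding $|x_i|=m$ requires a clean accounting of the $O(\log m)$ overhead from both the padding and the decoding of which prefix of $x_i$ is random, which is precisely why the hypothesis takes $m$ to be a sufficiently large multiple of $n$. A secondary subtlety is that conditions (i) and (ii) alone do not pin $f$ down uniquely, so the bound is most naturally stated against a specific reasonable $f$ such as the bandwidth sum; the argument above in fact delivers $f(\pi)\geq m/n$ for any criterion that is monotone in and responsive to the magnitudes of the nonzero optimality deficiencies.
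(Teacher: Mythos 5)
Your proof is correct and follows essentially the same route as the paper's: take $n$ strings of length $m$ with $K(x_i)=im/n$ (the paper simply asserts such strings exist, while you instantiate them by padding random strings), apply pigeonhole to force a non-singleton part for $k<n$, and bound the resulting deficiency/bandwidth from below by the complexity gap $m/n$. Your explicit handling of the $O(\log m)$ slack and the remark that the bound should be read against a concrete criterion such as the bandwidth sum are refinements of, not departures from, the paper's argument.
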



In practice we may use the optimality deficiencies within the standard 
deviation around the mean to determine the criterion function $f(\pi)$
for a partition 
$\pi \in \Pi_k$ of $S$ into parts $S_1, \ldots , S_k$.
This is a more refined method since it eliminates the outliers.
only counting the central items (68.2\% if they are normally distributed)
of the optimality deficiencies in each part $S_i$. The {\em mean} of $S$ is 
$\mu_S = 1/|S| \sum_{x \in S} x$.  
The {\em standard deviation} of the $\delta(S,x)$ of a multiset $S$ is
\[
\sigma_S = \frac{1}{|S|} \sqrt{\sum_{x \in S} 
(\delta(S,x) - \mu_S)^2}. 
\]
\begin{definition}
\rm
Let $S$ be a multiset of strings, 
$S_{\sigma}= \{x \in S:|x-\mu_S| \leq \sigma_S\}$ and 
$f_{\sigma}$ is the criterion function of $S_{\sigma}$.
\begin{eqnarray}
H_{S_\sigma}^{f_{\sigma}}(k)&=& \min_{\pi \in \Pi_k} \max_{1 \leq i \leq k} 
f_{\sigma} (\pi) ,
\end{eqnarray}
where $\pi$ divides $S_{\sigma} = \bigcup_{1 \leq i \leq k} S_{\sigma,i}$ into 
$k$ parts $S_{\sigma,1}, \ldots , S_{\sigma,k}$
\end{definition}
That is, $H_{S_\sigma}^{f_{\sigma}}(k)$ is the minimum over all 
partitions of $S_{\sigma}$ into $k$ parts.
It clusters possibly better since  
$H_S^{f_{\sigma}}(k) \leq H_S^{f}(k)$ for all $k$ ($1 \leq k \leq n$) implying
by Section~\ref{sect.meaning} that the conditional 
probabilities between most members of a part of
a witness partition may be larger but never smaller 
using $H_S^{f_{\sigma}}(k)$ than using $H_S^{f}(k)$.
\begin{lemma}\label{lemma.4}
Let $S =\{x_1, \ldots , x_n \}$ with $n \geq 2$.
Then 
$H_{S_{\sigma}}^{f_{\sigma}}(1) >0 $, 
$H^{f_{\sigma}}_{S_{\sigma}}(n)=0$, and
$H^{f_{\sigma}}_{S_{\sigma}}$ is monotonic non-increasing.
\end{lemma}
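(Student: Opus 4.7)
The plan is to dispatch the three assertions separately, leveraging Lemma~\ref{lem.sufficiency} for the boundary cases and mirroring the proof of Lemma~\ref{lemma.1} for monotonicity.

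For $H^{f_\sigma}_{S_\sigma}(n)=0$, I would take the finest partition $\pi=\{\{x\}\}_{x\in S_\sigma}$; every singleton satisfies $K(\{x\})+\log 1 - K(x)=0$ up to the $O(1)$ terms we are ignoring, so every optimality deficiency in $\pi$ vanishes and property~(ii) of the criterion function gives $f_\sigma(\pi)=0$. Non-negativity of $f_\sigma$ then forces equality. For $H^{f_\sigma}_{S_\sigma}(1)>0$, the only partition in $\Pi_1$ is $\{S_\sigma\}$, and --- provided $|S_\sigma|>2$, the regime in which the $\sigma$-truncation is meaningful --- Lemma~\ref{lem.sufficiency}(i) supplies some $y\in S_\sigma$ with $\delta(S_\sigma,y)>0$. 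Reading property~(ii) of the criterion function as its natural strengthening ($f_\sigma(\pi)=0$ iff all optimality deficiencies in $\pi$ vanish), the presence of a positive deficiency forces $f_\sigma(\{S_\sigma\})>0$.

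For monotonicity, fix a minimiser $\pi^*\in\Pi_k$ of $H^{f_\sigma}_{S_\sigma}(k)$ and pick a non-singleton part $A_i\in\pi^*$ (one must exist whenever $k<|S_\sigma|$). Select the element $x\in A_i$ maximising $\delta(A_i,\cdot)$ and refine $\pi^*$ into $\pi'\in\Pi_{k+1}$ by splitting $A_i$ into $A_i\setminus\{x\}$ and $\{x\}$. The new singleton contributes a deficiency of $0$, and property~(i) of $f_\sigma$ then delivers $f_\sigma(\pi')\leq f_\sigma(\pi^*)$, proving $H^{f_\sigma}_{S_\sigma}(k+1)\leq H^{f_\sigma}_{S_\sigma}(k)$.

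The main obstacle is pinning down this monotonicity step rigorously, because splitting $A_i$ alters both $K(A_i\setminus\{x\})$ and $\log|A_i\setminus\{x\}|$, so the residual deficiencies $\delta(A_i\setminus\{x\},\cdot)$ need not majorise the old $\delta(A_i,\cdot)$ pointwise; no direct numerical comparison is available. The cleanest resolution is to rely exclusively on the abstract property~(i): the criterion function must respond non-positively whenever some deficiency is replaced by zero, and this abstract input is exactly what allows the refinement argument of Lemma~\ref{lemma.1} to transfer verbatim to the $\sigma$-truncated setting.
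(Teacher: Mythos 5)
Your argument is correct in substance but takes a different route from the paper's, which dismisses this lemma with the single line ``Similar to the proof of Lemma~\ref{lemma.ub}'' --- i.e.\ it points at the construction-based argument (the set with $K(x_i)=im/n$ and the identity $\max_{x}\delta(A,x)-\min_{x}\delta(A,x)=\max_{x}K(x)-\min_{x}K(x)$), whereas you transplant the abstract refinement argument of Lemma~\ref{lemma.1} together with Lemma~\ref{lem.sufficiency} for the $k=1$ claim. Your reading is arguably the more defensible one: Lemma~\ref{lemma.ub} is an existence statement about one specially constructed $S$, so it cannot by itself establish the universally quantified monotonicity and endpoint claims here, while your refinement argument applies to every $S$ and every admissible $f_\sigma$; the paper's pointer at best supplies intuition for why $H_{S_\sigma}^{f_\sigma}(1)>0$ when the complexities are spread out. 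You also correctly identify the two soft spots, both of which sit in the paper's framework rather than in your proof: property~(ii) of the criterion function is stated only as an ``if,'' not an ``iff,'' so $H_{S_\sigma}^{f_\sigma}(1)>0$ does not follow from the stated axioms without the strengthening you propose; and Lemma~\ref{lem.sufficiency}(i) forces a strictly positive deficiency only for $|A|>2$ (and is itself conditional on some element having deficiency zero), so $|S_\sigma|\le 2$ needs a separate word. Finally, the obstacle you flag in the monotonicity step --- that splitting a part changes the remaining deficiencies, so property~(i) does not literally compare the two partitions --- is real, but your resolution is exactly the handwave the paper itself commits in the proof of Lemma~\ref{lemma.1}, so you have faithfully reconstructed the intended argument while being more candid about its gaps.
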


\begin{lemma}\label{lemma.lbsd}
There exists $S \subseteq \{0,1\}^m$ with $|S|=n$ such that 
$H_{S_{\sigma}}^{f_{\sigma}}(k)=0$ for all $1 \leq k \leq n$ up to 
an additive term $O(\log K(S))$.
\end{lemma}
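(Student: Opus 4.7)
The plan is to reuse the multiset $S \subseteq \{0,1\}^m$ constructed for Lemma~\ref{lemma.lb}, together with its witness partitions. Recall that in that lemma one exhibits, for each $k \in \{1,\ldots,n\}$, a partition $\pi_k \in \Pi_k$ of $S$ such that every part $A$ of $\pi_k$ satisfies $\delta(A,x) = O(\log K(S))$ uniformly in $x \in A$; consequently $f(\pi_k) = O(\log K(S))$ by the two defining properties of a criterion function. I will argue that essentially the same partitions, restricted to $S_\sigma$, witness the analogous bound for $H_{S_\sigma}^{f_\sigma}$.

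First, since every deficiency $\delta(S,x)$ is of size $O(\log K(S))$, both the mean $\mu_S$ and the standard deviation $\sigma_S$ of the deficiencies are themselves $O(\log K(S))$. Hence every $x \in S_\sigma$ still satisfies $\delta(S,x) = O(\log K(S))$, and by a Chebyshev-type estimate $S_\sigma$ contains at least a constant fraction of the points of $S$, so no relevant quantities are degraded beyond the claimed additive slack.

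Next, I form a partition of $S_\sigma$ by intersecting each part of $\pi_k$ with $S_\sigma$ and discarding empty intersections. If strictly fewer than $k$ non-empty parts remain, I further split one of the larger surviving parts along the same structural lines used in the construction of Lemma~\ref{lemma.lb} until exactly $k$ non-empty parts are produced. For each resulting part $A' = A \cap S_\sigma$, the quantities $K(A')$ and $\log |A'|$ differ from $K(A)$ and $\log |A|$ by at most $O(\log K(S))$, because $A'$ is specified from $A$ by an indicator of size $O(\log|A|)$, so the deficiencies $\delta(A',x)$ for $x \in A'$ remain $O(\log K(S))$. By property~(i) of the criterion function $f_\sigma$ (its value does not increase when deficiencies shrink), the value $f_\sigma$ on this restricted partition is then also $O(\log K(S))$, and minimising over $\Pi_k$ yields $H_{S_\sigma}^{f_\sigma}(k) = O(\log K(S))$ for every $k$, which is the claim.

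The main obstacle is the bookkeeping in the restriction step: one must ensure that intersecting with $S_\sigma$ and then topping up to exactly $k$ non-empty parts does not inflate the uniform deficiency bound. This is cleanest not by treating Lemma~\ref{lemma.lb} as a black box but by directly inspecting its construction (so that the re-splitting operation mirrors the original nested subdivision and the $O(\log K(S))$ budget is preserved at each step). Verifying this is routine but requires committing to the specific structured set used in Lemma~\ref{lemma.lb}.
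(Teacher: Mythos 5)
Your proposal is correct and takes essentially the same route as the paper, whose entire proof of this lemma is the single sentence ``Similar to proof in Lemma~\ref{lemma.lb}'': you reuse the bit-flip construction, observe that all optimality deficiencies (hence their mean and standard deviation) are $O(\log K(S))$, and carry the uniform bound over to partitions of $S_\sigma$. One minor caveat: the ``Chebyshev-type estimate'' at one standard deviation is vacuous (it guarantees only that $S_\sigma \neq \emptyset$), but this is immaterial here since every element of $S_\sigma \subseteq S$ already has deficiency $O(\log K(S))$, and any gap at large $k$ when $|S_\sigma| < n$ is a defect of the lemma's statement that the paper's own one-line proof inherits as well.
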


\begin{lemma}\label{lemma.ubsd}
There exists a multiset $S \subset \{0,1\}^m$ and $|S|=n$
with $m$ multiple of $n$ such that $H^{f_{\sigma}}_{S_{\sigma}}(k) \geq m/n$ for
$1 \leq k \leq n-1$ and $H^{f_{\sigma}}_{S_{\sigma}}(n)=0$.
\end{lemma}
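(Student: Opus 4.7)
My plan is to reuse the construction from Lemma~\ref{lemma.ub} and to observe that on that particular witness multiset the $\sigma$-trimming collapses to the identity, so the same lower bound transfers verbatim.

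Concretely, I would fix $m$ a sufficiently large multiple of $n$ and take $S = \{x_1, \dots, x_n\} \subset \{0,1\}^m$ to be a multiset of $n$ jointly algorithmically independent Kolmogorov-random strings, so that $K(x_i) = m$ and $K(S) = nm$ up to $O(\log(mn))$; such $S$ exists because a positive fraction of all length-$mn$ concatenations has this property. The single-set deficiencies are then
\[
\delta(S, x_i) = K(S) + \log n - K(x_i) = (n-1)m + \log n
\]
up to a logarithmic slack, so $\mu_S$ equals this common value and $\sigma_S = O(\log K(S))$. Consequently every $x_i$ lies within $\sigma_S$ of $\mu_S$, and $S_\sigma = S$.

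Next I would evaluate $H^{f_\sigma}_{S_\sigma}$. The partition of $S_\sigma = S$ into singletons witnesses $H^{f_\sigma}_{S_\sigma}(n) = 0$, because every singleton $\{x\}$ has $\delta(\{x\}, x) = 0$ and property~(ii) of the criterion function forces $f_\sigma = 0$. For $1 \leq k \leq n-1$, any $\pi \in \Pi_k$ must, by pigeonhole, contain a part $S_i$ of size $\ell \geq 2$; joint independence of the $x_j$ gives $K(S_i) = \ell m$ up to $O(\log(mn))$ and hence $\delta(S_i, x) = (\ell - 1)m + \log \ell \geq m$ for every $x \in S_i$. Because the deficiencies inside $S_i$ are all equal up to logarithmic terms, the $\sigma$-trimming applied inside this part leaves multi-element parts intact, and the propagation argument from Lemma~\ref{lemma.ub} forces $f_\sigma(\pi) \geq m/n$.

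The main obstacle is the last step: turning the per-element lower bound $\delta(S_i, x) \geq m$ in a single multi-element part into a global lower bound $m/n$ on $f_\sigma(\pi)$. This is exactly the averaging-plus-pigeonhole step that is already the heart of Lemma~\ref{lemma.ub}, and it relies on property~(i) of the criterion function together with the natural per-element normalization used throughout the paper; once it is in hand there, essentially no new work is required for the $\sigma$-version because the construction has been chosen so that trimming is a no-op. The remaining routine task is to take $m$ large compared with $\log n$ and $\log K(S)$ so that every logarithmic slack is absorbed into the advertised bound.
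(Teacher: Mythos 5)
There is a genuine gap, and it comes from swapping out the construction that Lemma~\ref{lemma.ub} (to which the paper's proof of Lemma~\ref{lemma.ubsd} defers) actually relies on. In the paper the criterion function being bounded is the bandwidth one from the Example: each part $A$ contributes $\max_{x\in A}\delta(A,x)-\min_{x\in A}\delta(A,x)$, and the whole point of the displayed identity in the proof of Lemma~\ref{lemma.ub} is that this difference equals $\max_{x\in A}K(x)-\min_{x\in A}K(x)$, because $K(A)+\log|A|$ is the same for every $x\in A$. The paper therefore chooses $K(x_i)=im/n$, so that any part with two distinct elements contains complexities at least $m/n$ apart and hence has bandwidth at least $m/n$. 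You instead take all $x_i$ jointly random with $K(x_i)=m$ for every $i$. Then every part of every partition has $\max_{x}K(x)-\min_{x}K(x)=O(\log(mn))$, so the bandwidth criterion evaluates to essentially zero on \emph{every} partition and $H^{f_\sigma}_{S_\sigma}(1)=O(\log(mn))$, not $\geq m/n$. The quantity you lower-bound, $\delta(S_i,x)\geq(\ell-1)m$, is the deficiency itself, not the spread of deficiencies within a part; the ``propagation argument from Lemma~\ref{lemma.ub}'' you invoke is exactly the bandwidth-equals-complexity-spread identity, and for your $S$ it yields $0$ rather than $m/n$. So the step you flag as ``the main obstacle'' is not a routine transfer --- it fails outright for your choice of $S$.

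Your instinct to arrange matters so that the $\sigma$-trimming is a no-op is a legitimate concern (the paper's one-line proof is silent on it, and for the paper's own arithmetic-progression construction the trimming does discard a constant fraction of the elements). But the fix is not to flatten the complexity profile; it is to keep the complexities spread out and then observe that whatever survives the trimming still has pairwise complexity gaps of at least $m/n$, since the $K(x_i)=im/n$ are pairwise distinct by that margin. Hence every non-singleton part of $S_\sigma$ still has bandwidth at least $m/n$, pigeonhole applies for $k<|S_\sigma|$, and singletons give the value $0$ at $k=|S_\sigma|$. As a smaller point, even for your construction the claim $S_\sigma=S$ is not automatic: if the deficiencies agree only up to $O(\log(mn))$ fluctuations, the standard deviation can be much smaller than the largest individual fluctuation, so one-standard-deviation trimming may still remove elements.
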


\subsection{Proofs}\label{sect.proofs}
\begin{proof} of Lemma~\ref{lem.sufficiency}
(i) For all $y \in A$ we have $K(y) \leq K(A)+\log|A|$ which implies
$K(y) \leq K(x)$ (since $K(x)=K(A)+\log|A|$) and therefore 
$\delta(A,y)\geq 0$ and hence $\mu_A \geq 0$. 
For $|A| >2$ there are $y \in A$ such that $K(y)<K(x)$ since 
$K(y) < K(A)+\log |A|=K(x)$.
For example if $y$ is the first element of $A$ and therefore $K(y) \leq K(A)$.
Hence $\delta(A,y) > 0$ and $\mu_A >0$.

Ad (ii) There is an $x \in A$ such that $\delta(A,x) < 0$. 
For example $A$ is a sufficiently long 
interval of integers of (represented by $n$-bit strings) of length $O(2^n)$ 
with end points
of $O(\log n)$ Kolmogorov complexity and $x \in A$ is a random string
in that interval which means $K(x) = \Omega (n)$. Then $\delta(A,x) <0$ and 
by Item (i) there are no $y \in A$ such that $\delta(A,y)=0$. 
\end{proof}
 
\begin{proof} of Lemma~\ref{lemma.1} 
The graph of $H^{f}_S$ starts with the partition of $S$ into 1 part
(no partition). 

$n=1$. The optimality deficiency involved is 0
and by Definition~\ref{def.H} we have $H_S^{f}(1)=0$.  

$n>1$. 
Let $1 \leq k < |S|$. By Item (i) in the definition of the criterion function, 
if $\pi \in \Pi_{k+1}$ and we change one of the optimality deficiencies 
of the elements to 0 then the criterion function $f$ does not increase. 
Hence the minimum of $f$ for a partition in $\Pi_k$ is not larger than 
the minimum of $f$ for a partition in $\Pi_{k+1}$. 
Therefore $H_S^{f}$ is monotonic non-increasing.
For $k=n$ the multiset $S$ is partitioned into singleton sets which all 
have optimality deficiency 0. Hence $H^f_S(n)=0$.
\end{proof}

\begin{proof} of Lemma~\ref{lemma.lb}
Choose $x\in \{0,1\}^m$ and $S$ with $|S|=n$ such that
$S=\{y: |y|=m$ and $y$ equals $x$ with the $i$th bit flipped 
$(1\leq i \leq n) \}$. Then for each $y \in S$ we have
$K(S) = K(y)+O(\log n)$.
Therefore $\delta(S,y) = K(S) + \log n-K(y)=O(\log n)$ for all $y \in S$.
Hence $H_S^{f}(1)=O(\log n)=O(\log K(S))$. For every $k$
($1 < k \leq n$) we describe the partition $\pi \in \Pi_k$
which witnesses  $H_S^{f}(k)$ by giving $S$ in $K(S)$ bits,
the integer $k$ in $O(\log n)$ bits and an $O(1)$ program.
This program does the following: given $k$ and $S$ it generates
all finitely many partitions $\pi \in \Pi_k$. 
A partition $\pi \in \Pi_k$ of $S$ divides it into, say, $S_1, \ldots, S_k$.
By the symmetry of information law \cite{Ga74} we have
$K(S)=K(S_i)+K(S|S_i) +O(\log K(S))$ or
$K(S_i) \leq K(S)-O(\log K(S))$. For every $y \in S_i$ therefore
$\delta(S_i,y) = K(S_i) +\log |S_i| -K(y) \leq
K(S) - O(\log K(S))+\log |S| -K(y)= \delta(S,y) -O(\log K(S))$. 
Since $1 \leq k \leq n$ this proves the lemma.
\end{proof}

\begin{proof} of Lemma~\ref{lemma.ub}
Let $S=\{x_1, \ldots ,x_n\}$ with $K(x_i)= im/n$ for $1 \leq i \leq n$.
(This is possible since all $n$ members of are strings of 
length $m$ and they can have 
complexity varying continuously between at least $m$ and close to $0$.)
Since for each finite multiset $A$ and $x \in A$ we have 
$\delta(A,x) = K(A)+\log |A| - K(x)$ and therefore 
\[
\max_{x \in A} \delta(A,x) - \min_{x \in A} \delta(A,x) =
\max_{x \in A} \{K(x)\} - \min_{x \in A} \{K(x)\}. 
\]
For a $k$-partition of $S$ at least one $S_i$ in the partition has 
cardinality at least $n/k$. Therefore, if $n/k > 1$ then by the displayed
equality $H_S^{f}(k) > m/n$. This holds for $k=1, \ldots , n-1$.
For $k=n$ all parts $S_i$ in the partition are singleton sets and hence
$H_S^{f}(k) = 0$. 
\end{proof}

\begin{proof} of Lemma~\ref{lemma.4}.
Similar to the proof of Lemma~\ref{lemma.ub}.
\end{proof}
\begin{proof} of Lemma~\ref{lemma.lbsd}. 
Similar to proof in Lemma~\ref{lemma.lb}.
\end{proof}
\begin{proof} of Lemma~\ref{lemma.ubsd}.
Similar to the proof of Lemma~\ref{lemma.ub}.
\end{proof}

\subsection{Computing the number of clusters}\label{sect.number}

To determine the number $K$ of clusters in data $S$ we compare
a cluster structure function used on $S$ with the same cluster function
on {\em reference set} of  $|S|$ data distributed uniformly.
We do this comparison as the logarithm of the ratio. Using the 
cluster function $H^f_S$  on the 
data set $S$ the number $K$
of clusters in $S$ is the $k$ where the log-ratio $D^f(k)$ is
greatest. Formally
\begin{align*}
D^f (k) & = \log H^f_N(k)-\log H^f_S(k) \\
K & = \arg\max_k D^f (k), 
\end{align*}
with the reference placement 
is the uniform distribution of $|S|$ data samples
over the range spanned by $S$.
For example if $S$ is a set of numbers
than its range is the interval $I=[\min S, \max S]$. Note that every set
$S$ is represented in a computer memory as a finite set of finite 
strings of 0's and 1's and that therefore $\min S$ and $\max S$ are
well defined. 
Divide I in $n$
equal parts $I_1, \ldots ,I_n$ with $\bigcup_{i=1}^n I_i = I$ and
$I_i \bigcap I_j = \emptyset$ for $1 \leq i \neq j \leq n$.
Item $i \in N$ is positioned in the middle of subinterval 
$I_i$ ($1 \leq i \leq n$).

To deal with the incomputability of the function $K$
we approximate
$K$ from above by a good compressor $Z$. 
If $x$ is a string then
$Z(x)$ is the length of the by $Z$ compressed version of $x$.
The  function $Z$ is by construction a computable function, 
even a feasibly computable one
(for example $Z$ is bzip2 or some other compressor).
Because $K$ is incomputable there are strings $x$
such that $K(x) \ll Z(x)$ and the difference $Z(x)-K(x)$ is incomputable.
However for natural data we assume that they encode no universal computer
or problematic mathematical constants like the ratio of the
circumference of a circle to its diameter $3.14 \ldots .$ We assume
that for the natural data we encounter the compression by $Z$ has
a length which is close to its prefix Kolmogorov complexity.
The same holds for a multiset $A$ of strings.
We represent $A = \{x_1, \ldots , x_n\}$ as a string 
$s(A)=1^{|x_1|}0x_1 \ldots 1^{|x_n|}0x_n$ with 
$|s(A)|=|x_1 \ldots x_n|+O(\log |x_1|+ \cdots + \log | x_n|)$.

For a partition $\pi \in \Pi_k$ of $S$ ($|S|=n$) we compute
the $\delta(S_i,x)$'s by computing
$Z(S_i)$ ($1 \leq i \leq k$) and $Z(x)$ for all $x \in S$.
To do so we require at most $k+n$ compressions. We write ``at most''
since a member of a multiset $S$ can occur more than once.

%

\section{Probabilities Among Members of Clusters}\label{sect.meaning}
By Lemma~\ref{lem.sufficiency} 
a part $A$, with more than two members, of a witness partition of $S$ 
can not be a sufficient statistic for all of its elements.
In clusters the
members of a cluster typically share some characteristics but not 
all characteristics. It turns out that in an appropriate sense
the members of a cluster are nonetheless 
probabilistically close. 

We define a conditional probability
of $n$-bit strings following \cite{Mi15}. We start with the 
unconditional probability.
Let a finite set $A$ of $n$-bit strings be chosen 
randomly with probability
${\bf m}(A)= 2^{-K(A)}$, and subsequently $x \in A$ is chosen 
with uniform probability 
from $A$, that is, $x$ is chosen with probability ${\bf m}(A)/|A|$.
(Since $K(x)$ is a length of a prefix code we have by Kraft's inequality
\cite{CT91} that $\sum_x 2^{-K(x)} \leq 1$. Hence ${\bf m}$ is a
semiprobability. A semiprobability is just like a probability
but may sum to less than 1. The particular semiprobability 
${\bf m}$ is called {\em universal}
since it is the largest lower semicomputable  
semiprobability \cite{Le74}. In absence of any information about $A$
we can assign ${\bf m}(A)$ as its probability. 
Properties are discussed in the text \cite{LV08}). 
\begin{definition}\label{def.p}
\rm
For each $y \in A$ we define the {\em conditional probability} $p(y|x)$ by
\[
p(y|x) = \frac{ \sum_{A \ni x,y} {\bf m}(A)/|A|}{\sum_{A \ni x} {\bf m}(A)/|A|}
\] 
\end{definition}
We show below that all pairs of strings in a 
part of a witness partition of multset $S$ of $n$ strings  
have an expectation of the conditional $p$-probability with respect 
to each other
which is at least $2^{-H^{f}_S}(k)$ for some $k \leq n$.
Hence the smaller $H^{f}_S (k)$ is the more all strings in a part of a
witness partition of $H^{f}_S(k)$ have a large conditional probability
with respect to each other: they form a cluster.

\begin{theorem}\label{theo.cp}
Let $S \subseteq \{0,1\}^n$ (consider only $n$-length strings) and
a witness $k$-partition of $S$ for $H_S^{f}(k)$ that divides $S$ into parts 
$S_1, \ldots , S_k$. The expectation taken over a random variable
$p(y|x)$ for pairs
$x,y \in S_i$ for some $i$ ($1 \leq i \leq k$) is
${\bf E}[p(y|x)] \geq 2^{-H_S^f(k) - O(\log n)}$ and
${\bf E}[p(y|x)]$ becomes at least $(1/n)^{O(1)}$ for $k \rightarrow n$.
\end{theorem}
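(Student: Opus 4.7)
The plan is to derive a pointwise lower bound on $p(y|x)$ for each pair sitting inside a common part $S_i$ of the witness partition, and then convert it into the claimed bound on the expectation through Jensen's inequality.

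First I would lower bound the numerator $\sum_{A \ni x,y} {\bf m}(A)/|A|$ by retaining only the single term $A = S_i$, which contributes because $x,y \in S_i$; this gives a bound of ${\bf m}(S_i)/|S_i| = 2^{-K(S_i) - \log |S_i|}$. For the denominator $\sum_{A \ni x} {\bf m}(A)/|A|$ I would argue that, viewed as a function of $x$, it is a lower semicomputable semimeasure on $\{0,1\}^n$: summing over $x$ gives $\sum_A {\bf m}(A)\cdot(|A|/|A|) = \sum_A {\bf m}(A) \leq 1$, and lower semicomputability is inherited from ${\bf m}(A) = 2^{-K(A)}$. By the universality of ${\bf m}$ (multiplicative domination of every lower semicomputable semimeasure), the denominator is at most $O({\bf m}(x)) = O(2^{-K(x)})$. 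Dividing, $p(y|x) \geq 2^{K(x) - K(S_i) - \log |S_i| - O(1)} = 2^{-\delta(S_i,x) - O(1)}$.

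Next I would average this pointwise bound over pairs $(x,y)$ drawn uniformly from $S_i \times S_i$. Since the bound depends only on $x$, the expectation reduces up to an $O(1)$ constant to $\frac{1}{|S_i|}\sum_{x \in S_i} 2^{-\delta(S_i,x)}$. Jensen's inequality applied to the convex map $t \mapsto 2^{-t}$ then gives the lower bound $2^{-\mu_{S_i}}$, where $\mu_{S_i}$ is the mean optimality deficiency in $S_i$ from Definition~\ref{def.delta}. The final step is to connect $\mu_{S_i}$ to $H_S^f(k)$: since the witness partition $\pi$ minimizes $f$ and, by the axiomatic properties (i), (ii) of the criterion function in Section~\ref{sect.csf}, $f$ is a monotone functional of the optimality deficiencies that vanishes precisely when they do, one obtains $\mu_{S_i} \leq H_S^f(k) + O(\log n)$, the logarithmic slack absorbing the cost of singling out the index $i$ in $\pi$, encoding $S_i$ from $S$ and $\pi$, and the $O(1)$ universal-semimeasure constant. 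Substituting gives ${\bf E}[p(y|x)] \geq 2^{-H_S^f(k) - O(\log n)}$, and the limiting assertion ${\bf E}[p(y|x)] \geq n^{-O(1)}$ as $k \to n$ follows directly from Lemma~\ref{lemma.1}, which gives $H_S^f(n) = 0$, so the bound collapses to $2^{-O(\log n)}$.

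The hard part will be the last step. The criterion function $f$ is only axiomatically specified, so the inequality $\mu_{S_i} \leq H_S^f(k) + O(\log n)$ is not purely formal. For natural choices such as $f(\pi) = \sum_i \sum_{x \in S_i} \delta(S_i, x)$ or $\max_{i,x} \delta(S_i,x)$ the claim is essentially immediate, but for the bandwidth-based $f$ of the Example one may need an extra hypothesis tying $f$ to the mean deficiencies, or a sharper interpretation of the expectation (for example the worst-part expectation rather than an average over parts). A secondary technicality is carefully bookkeeping the $O(\log n)$ overhead coming from describing the partition index $i$ and the universality constant in the semimeasure step.
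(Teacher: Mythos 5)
Your argument is correct in substance and hits the same intermediate milestones as the paper's proof (a pointwise bound $p(y|x)\gtrsim 2^{-\delta(S_i,x)}$, then AM--GM/Jensen to get $2^{-\mu_{S_i}}$, then a link to $H_S^f(k)$), but your route to the pointwise bound is genuinely different and arguably cleaner. The paper does not truncate the numerator to the single term $A=S_i$; instead it proves an internal claim $p(x|y)=\Theta({\bf m}(x,y))/\Theta({\bf m}(x))=2^{-K(x|y)-O(\log n)}$ (universality applied to both numerator and denominator, plus the coding theorem and symmetry of information) and then invokes an external result, Theorem~5 of \cite{Mi15}, to convert $\delta(S_i,x)\le d$ into $p(y|x)\ge 2^{-d-O(\log n)}$. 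Your one-term lower bound on the numerator, combined with the same universality/domination argument for the denominator that the paper uses, yields $p(y|x)\ge 2^{-\delta(S_i,x)-O(1)}$ directly and self-containedly, losing only an additive constant rather than $O(\log n)$, at the price of being one-sided --- which is all the stated theorem needs. From $2^{-\mu_{S_i}}$ onward the two proofs coincide: the paper averages $2^{-\mu_{S_i}\pm O(\log n)}$ uniformly over the $k$ parts, applies the arithmetic--geometric mean inequality once more, and then asserts $H_S(k)\ge (1/k)\sum_{i=1}^k\mu_{S_i}$ ``by Definition~\ref{def.H}.'' Your reservation about this last step is well taken and in fact applies equally to the paper's own proof: properties (i) and (ii) of the criterion function do not by themselves imply that $f(\pi)$ dominates the mean optimality deficiencies --- the bandwidth criterion of the Example can vanish while every $\mu_{S_i}$ is large (take all $K(x)$ equal within a part) --- so the inequality is an extra hypothesis on $f$ rather than a consequence of the definition. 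Your explicit flagging of this, and of the per-part versus averaged-over-parts formulation of the expectation, is the right diagnosis of where the argument is soft; the only inessential quibble is that your proposed $O(\log n)$ slack for ``encoding $S_i$ from $S$ and $\pi$'' is not actually what that step needs.
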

\begin{proof}
The parts of a witness to $H_S^f(k)$
form clusters because intuitively if the conditional probabilities 
in Definition~\ref{def.p}  
of different strings in a part of the witness partition are small
then the conditional Kolmogorov complexities are small:
\begin{claim}
\[
p(x|y)= \frac{\Theta({\bf m}(x,y))}{\Theta({\bf m}(x))}= 2^{-K(x|y)-O(\log n)}.
\]
\end{claim}
\begin{proof}
Start from Definition~\ref{def.p}.
The first equality holds by the following reasoning: 
since $\sum_{A \ni x} {\bf m}(A)/|A| 
= \Theta ({\bf m}(x))$ because
the lefthand side of the equation is a lower semicomputable function of $x$
and hence  it is $O({\bf m}(x))$; moreover if $A=\{x\}$ then the lefthand side
equals ${\bf m}(x)$. The same argument can be used for the pair $\{x,y\}$.
The second equality uses the coding theorem \cite{Le74} which states 
${\bf m}(x) = 2^{K(x)+O(1)}$ and the symmetry of information law \cite{Ga74}
which shows both the trivial $K(x,y) \leq  K(x)+K(y|x)$  and
$K(x,y) \geq  K(x)+K(y|x)-O(\log K(x,y))=K(x)+K(y|x)-O(\log n)$. 
The $\Theta$
order of magnitude is an $O(1)$ term in the exponent and absorbed in the 
$O(\log n)$ term.
\end{proof}

The conditional probabilities of pairs of strings in a part
of a $k$-partition of $S$ which is a witness to $H^{f}_S(k)$ satisfy
the following. 
By \cite[Theorem 5]{Mi15} if $x,y \in S_i$ for a {\em particular} $i$ 
($1 \le i \leq k$) 
and $\delta(S_i,x) \leq d$ then $p(y|x) \geq 2^{-d - O( \log n)}$,
while if $p(y|x) \geq 2^{-d}$ then $\delta(S_i,x)\leq d+O(\log n)$.
Hence $p(y|x)= 2^{-\delta(S_i,x) \pm O(\log n)}$. 
The expectation of $p(y|x)$ over $S_i$ is given by 
\begin{align*}
{\bf E}[p(y|x)] & = 1/|S_i| \sum_{x \in S_i}2^{-\delta(S_i,x) \pm O(\log n)}
\\& \geq 2^{-\sum_{x \in S_i} (\delta(S_i,x) \pm O(\log n))/|S_i|}
\\&= 2^{- \mu_{S_i} \pm  O(\log n)},
\end{align*}
using in the second line the inequality of arithmetic and geometric means.
This implies that if $x,y \in S_i$ for {\em some} $i$ ($1 \leq i \leq k$) 
then the expectation of $p(y|x)$ over all $S_i$ in a witness 
partition of $S$ is given by
\begin{align*}
{\bf E}[p(y|x)] & =(1/k) \sum_{i=1}^k 2^{- \mu_{S_i} \pm O(\log n)} 
\\& \geq 2^{-(1/k)\sum_{i=1}^k (\mu_{S_i}\pm O(\log n))}
\\& \geq 2^{- H_S(k)\pm O(\log n)},
\end{align*}
using in the second line again the inequality of arithmetic 
and geometric means and in the third line that $H_S(k) \geq 
(1/k)\sum_{i=1}^k \mu_{S_i}$ by Definition~\ref{def.H}.
Since $H_S(n)=0$ we have 
${\bf E}[p(y|x)]$ is at least $(1/n)^{O(1)}$ for $k \rightarrow n$.
\end{proof}

\begin{remark}
\rm
Roughly, the smaller $H_S$ becomes the larger the conditional
probabilities of the elements in a part of the witness partition become.
\end{remark}

\section{Related Literature}\label{sect.rlit}

This paper extends previous work in the field of algorithmic statistics 
\cite{GTV01,VV04,Vi06}. The applications build particularly on the field of semi-supervised spectral learning \cite{Kamvar,NJW,cohenNM}. Most previous approaches to estimating the number of clusters in a dataset utilize probabilistic statistical modeling of the data. The Bayesian and Akaike information criteria  both formulate the question w.r.t. underlying distributions estimated either parametrically or empirically \cite{TK08,bayes_bic,bayes_point,bayes_sparse}.  Bayesian methods are well suited when the likelihood function and prior probabilities are known. In comparison, the algorithmic statistics approach proposed here works with the particular dataset rather than probabilities across a hypothesized population. \color{black} Recently, alternative approaches based on characteristics of the specific data set in question, rather than a population-level model, have been considered \cite{bayes_simplical,bayes_xval}. These include particularly the widely used Gap statistic \cite{Tib01} that is very similar in spirit to the implementation described here. The connection between the gap statistic and the field of algorithmic statistics was one of the key motivators for this work \cite{ARC1}. The gap statistic compares the spatial characteristics of the data being clustered to that of a randomly generated reference distribution. Our approach is similar in both theory and practice to the gap statistic. Many of the advantages of the two approaches are shared. Both are effective when K=1, that is there are no meaningful clusters among the data. Both are reasonably efficient to compute.  DBScan combines the clustering and K estimation into a single task, and provides parameters for fine tune control \cite{dbscan}. In theory the cluster structure function might be used in an automated parameter search with such an algorithm.  

In the computational biological microscopy image analysis area we build on previous work for optimally partitioning connected components of foreground pixels into elliptical regions \cite{Winter}. A key advantage of the cluster structure function compared to all other approaches is the very broad and powerful theoretical structure of Kolmogorov complexity and Algorithmic Statistics. The techniques are generally parameter free, beyond the selection of a suitable compression algorithm. In theory it will be possible to automatically identify the optimal compression by considering ensembles of algorithms and choosing the best results among them via the structure function.

\section{Example Applications}\label{sect.eappl}

\subsection{How many different digits are in a MNIST digit set?}

Here we apply the optimality deficiency to estimating the number of different digits in a set of digits sampled from the MNIST 
handwritten digits dataset. Classification of the MNIST digits using supervised
learning techniques is well studied but there has been little application of unsupervised learning to this
problem. One key challenge is establishing a ground truth number of different classes. Different
styles of handwriting were taught at different times in different locations. These differences are likely
reflected in the underlying data as distinct categories, even within digits of the same class label. Another
challenge is the difficulty in unsupervised classification of digits even when the correct number of classes 
is known. While supervised solutions for the MNIST digit classification are extremely accurate, unsupervised clustering of MNIST digits is still a difficult problem. 
The MNIST data has been normalized 
to 28x28 8-bit grayscale (0,..,255) images.
The MNIST database contains a total of 70,000  handwritten digits consisting of 60,000 training examples and 10,000 test examples.  Originally the input looks as Figure~\ref{fig.digits}.

\begin{figure}[b]
  \begin{center}
  \includegraphics[width=2.5in]{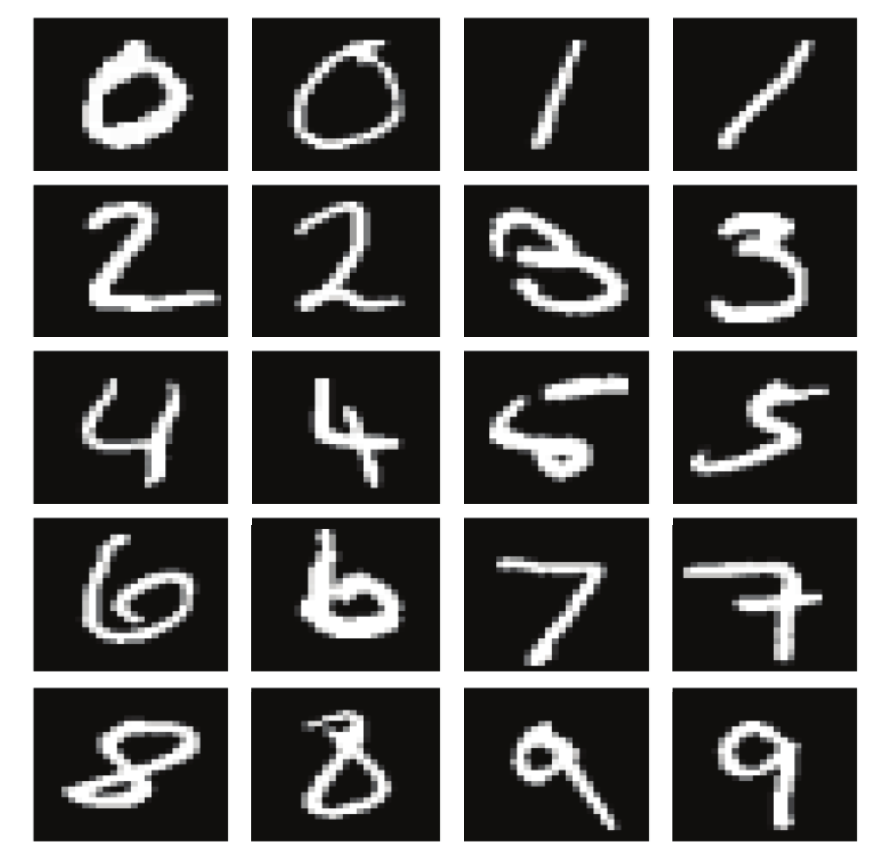}
  \end{center}
  \caption{Example MNIST handwritten digits}
  \label{fig.digits}
\end{figure}

We apply the cluster structure function to the question of estimating how many 
different MNIST digits are represented in a large set. 
Configure a sampler to choose a set of 100 digits at a time randomly given a fixed $K$ value. In each digit set, the cardinality of each digit
is given by $\lceil \frac{100}{K} \rceil$. For $K=3$ there would be 33 of each 
digit [0,1,2]. For $K=10$, there would be 10 of each digit $[0,9]$. Spectral clustering
\cite{NJW} is used to cluster MNIST digit sets \cite{NCDM}. 
The spectral clustering approach starts 
with the matrix of pairwise normalized compression distances (NCD) as in \cite{CV04}  
among all pairs of digits. We used the free lossless image 
format (FLIF) compressor \cite{FLIF} for the MNIST digits, and found it to significantly outperform the previously used BZIP and JPEG2000 compressors. After computing the NCD matrix between digit pairs, the classic spectral clustering algorithm \cite{NJW} is applied. Table \ref{clusterResult} shows the results of spectral clustering for all ten digits. Clustering accuracy for all ten digit types [0..9] was 46\% with 95\% confidence intervals of [.4622,.4657] obtained via bootstrapping \cite{TK08}.

\begin{table*}[h]
  \centering
  \includegraphics[]{"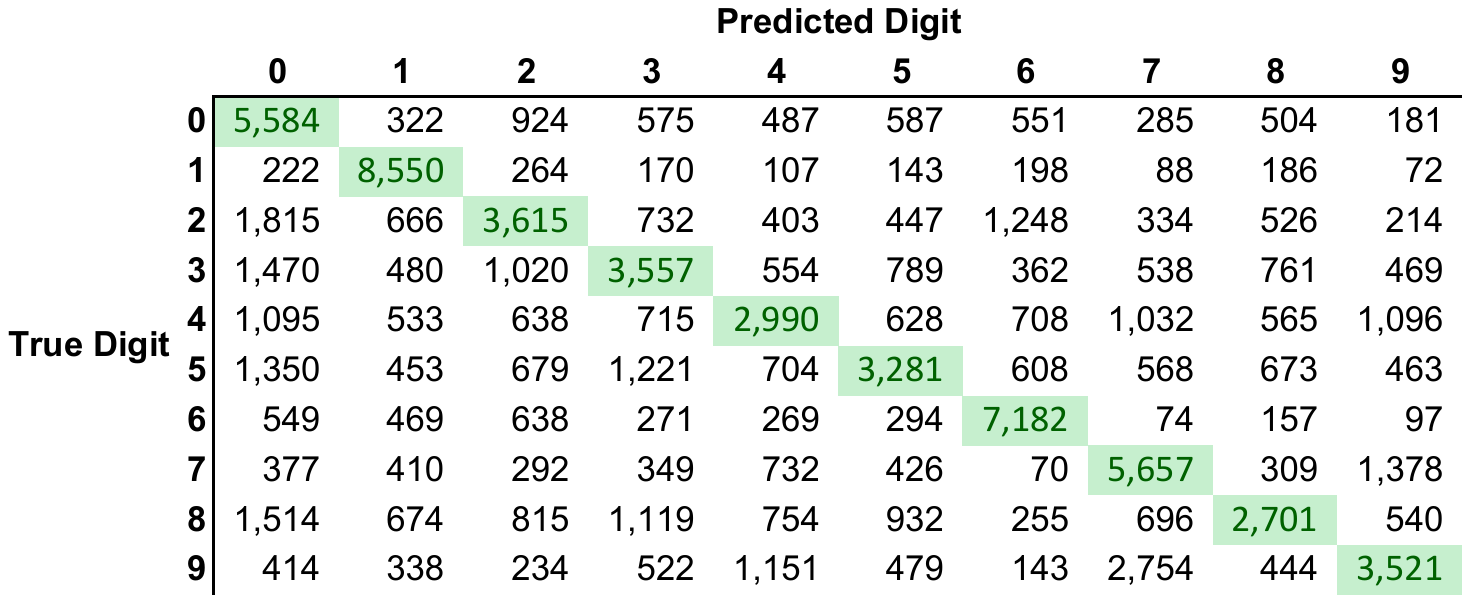"}
  \caption{\label{clusterResult} Confusion matrix for spectral clustering sets of random digits. Each digit set contains 5 of each digit [0,9]. The digit sets are clustered into 10 clusters. For evaluation, each cluster is labeled with the mode (most common element) of the true digit values in that cluster. This was repeated ten thousand times. Overall accuracy of clustering the ten digit classes is 46\%.
  } 
\end{table*}

To compute the CSF function following Section~\ref{sect.number} we proceed as follows.
For each digit set $S$, we generate Cluster Structure Function (CSF) curves. 
The digit set $S$ is clustered at different values of $K$. 
Random samples of the data forming subset  
$\tilde{S} \subseteq S$ are chosen iteratively. 
Statistics of the pointwise CSF curves are formed from the random samples 
$\tilde{S}$. The results here were generated using 1000 random samples of 
each digit set as follows. For each digit set $S$ compute the pairwise NCD matrix $D$ between all elements of $S$ using the FLIF image compression. For each $K$  on $[1, \ldots, K_{\max}]$ use spectral clustering to partition the elements of $S$ into $K$ groups. Each cluster (partition) is labeled $A_p=\{x_1,x_2,...,x_{|A_p|}\}$ and $|A_p|$ is the number of points in cluster $A_p$. After the points have been clustered for a particular value of $K$, pick subsets at random from $S$ 
to form $\tilde{S}$, $|\tilde{S}|=5*K$. Using the cluster assignments for each of the randomly selected points, compute the optimality deficiency for each random sample across each of the $K$ clusters $A_p$

\begin{equation}\label{eq.deltaCluster}
  \delta(A_p,x_i)=
  \begin{cases} 
    0 & |A_p|<2 \\
    Z(A_p)-Z(x_i)+\log |A_p| & |A_p| \geq 2,
  \end{cases}
\end{equation}
where $Z(A_p)$ is the size in bytes of the FLIF compressed image formed by concatenating all of the digit images in $\tilde{S}$ belonging to cluster $A_p$ and $Z(x_i)$ is the size in bytes of the compressed image corresponding to digit $x_i$. We write 
$\delta(A_p)=\{\delta(A_p,x_1),\delta(A_p,x_2),...\delta(A_p,x_{|A_p|})\}$ to denote the set of optimality deficiencies for each $x_i \in A_p$.
After computing $\delta(A_p)$ for each cluster from the digit set 
subsample $\tilde{S}$, the results of \ref{eq.deltaCluster} are combined 
to compute the related cluster structure function:

\begin{equation}\label{eq.HS}
  H_{\tilde{S}}(K) = \frac{\sum\limits_{p=1}^{K_{\max}}\log_2(\max(\delta(A_p)) - \min(\delta(A_p)) + 1)}{K_{\max}}.
\end{equation}

The final cluster structure function (CSF) curve is then generated using 
the mean and standard deviation of $H_{\tilde{S}}(K)$ across all random 
subsamples $\tilde{S}$ and $K$ values. The optimal value of $K$ for such 
a CSF curve is chosen using the technique proposed in \cite{Tib01}, as the 
first value of $K$ where the CSF curve decreases more than one standard 
deviation from the previous value.  A robust estimator for standard deviation may be useful in identifying the minimum value of the cluster structure function for some applications. \color{black}
Figure \ref{fig:odCurves} shows two example CSF curves. In the left panel, 
the correct  value is obtained at $K^{pred}=K^{true}$. In the right pane 
of Figure \ref{fig:odCurves}, the selected value is obtained 
at $K^{pred}=5$ and does not match the $K^{true}=9$ correct value, 
although there is a minor decrease at nine for that example. 

\begin{figure*}[h]
  \includegraphics[width=1\textwidth]{"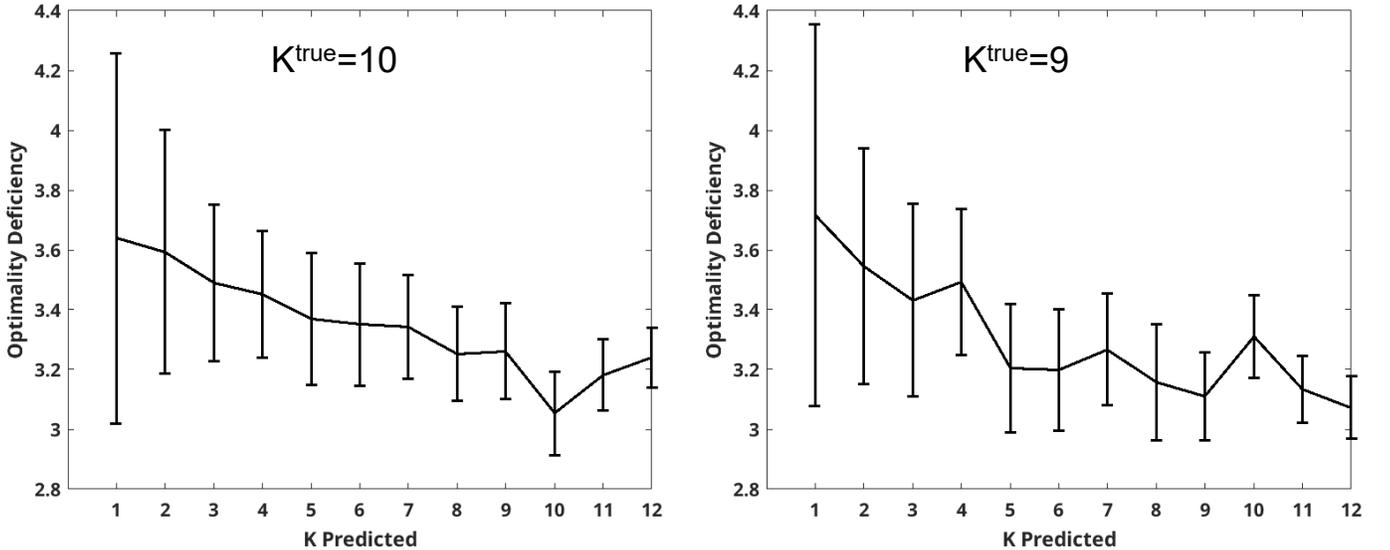"}
  \caption{\label{fig:odCurves} Curves showing mean and standard deviation of the cluster structure function (CSF) for two different digit sets. Subsets are chosen repeatedly from each digit set, and clustered into $K$ groups. The value of $K$ is chosen as the first $K$ that is one standard deviation smaller than the previous value. The left curve selects the value $K=K^{true}$, correctly identifying the value of $K$ corresponding to the number of different digits in the set. The right curve incorrectly selects $K=5$. 
  } 
\end{figure*}

The minimum of the empirical CSF curve is not in itself significant 
since the ideal theoretic CSF curve is monotonic non-decreasing and the minimum 
is always at 0 (Lemma~\ref{lemma.1}).
What makes the minimum possibly a little meaningful is when the minimum 
occurs just after a 
sharp decrease in the CSF curve. The empirical CSF curves of 
Figure~\ref{fig:odCurves} seem in contradiction 
with Lemma~\ref{lemma.1}. The curves in the figure  
roughly follow Lemma~\ref{lemma.1}, but they are the results of 
several heuristics so they may not be perfectly monotonic non-decreasing. 
The heuristics are among others:
approximation from above of the non-computable Kolmogorov complexity,
the spectral heuristic of finding the number of clusters rather than 
inspecting all the subsets of the data, and repeated random sampling of a 
subset $\tilde{S} \subseteq S$ computing the CSF curve of each $\tilde{S}$ 
and taking the average. To identify the number of clusters in the 
data one takes the number following the sharp decrease 
of the  CSF curve. Here the criterion to select the clusters is optimally 
satisfied.

Table \ref{tbl.nistUnsupervisedK} shows the average and standard deviations from subsampling digit sets of varying $K^{true}$. Digits sets with $K^{true}=1$ have a higher $K^{pred}$ value, and a much higher standard deviation compared to digits sets with other values of $K^{true}$. Omitting digit sets with $K^{true}=1$ significantly increases the correlation between the selected point on the CSF curve and $K^{true}$. For the CSF, the correlation $r$ between $K^{true}$ and $K^{pred}$ for $K^{true}>1$ is $r=0.93$, with a p-value $p=3e-4$. For the Gap Statistic, $r=-0.84$ ($p=5e-3)$. 
Based on that observation, a shallow feedforward neural network was used to map the CSF curves to a predicted value $K^{pred}$.

\begin{table*}[t]
  \centering
  \includegraphics[]{"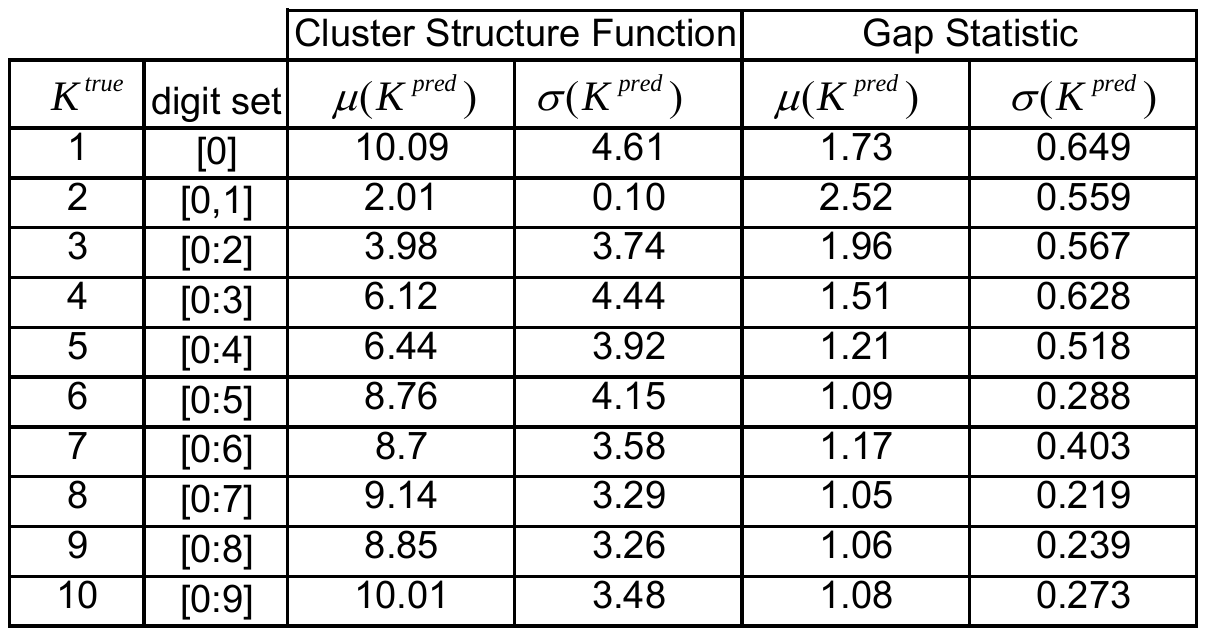"}
  \caption{\label{tbl.nistUnsupervisedK} Unsupervised cluster structure function (CSF) (left) and Gap Statistic (right) estimates of the number of unique digits $K$ in a MNIST digit set. Both CSF and Gap Statistic predictions $K^{pred}$ are correlated with $K^{true}$ except in case $K=1$ (where both exhibit much higher standard deviation). Omitting $K^{true}=1$, the CSF correlation is $0.93$ ($p=3e-4$) and the Gap Statistic correlation is $-0.84$ ($p=5e-3)$. 
  }   
\end{table*}

The approach is now to use the 20 element vector composed of the mean and standard deviations of the CSF curves evaluated at the numbers of clusters 
$K=[1..10]$ as a feature vector to identify the optimal value of $K$. We use one thousand examples each of digit sets from $K=[1..10]$ as training data (ten thousand total digit sets). Using the MATLAB patternet() classifier with all default parameters, a shallow feed forward neural network with 20 input layer nodes, 10 hidden layer nodes and 10 output layer nodes is trained using ten thousand digit sets, one thousand examples each from $K \in [1..10]$.  We classify 100 unknown digit sets. When the classification confidence is low, we repeat the sampling, selecting a new $S$ up to 10 times and average the results to form the prediction. Table \ref{tbl.nistResultsCM} shows the resulting predictions. The vertical axis of the table represents $K^{true}$, the horizontal axis represents $K^{pred}$. Elements on the diagonal represent correct classifications. Overall accuracy, measured as the percentage of non-zero results that fall on the diagonal of the confusion matrix is 86\% with a 95\% confidence interval [0.84,0.88] established by bootstrapping. We used the same procedure on the mean and standard deviation values obtained from the Gap Statistic (as in Table \ref{tbl.nistUnsupervisedK}) and obtained an accuracy of 54\% [0.51,0.57].

\begin{table*}[]
  \centering
  \includegraphics[]{"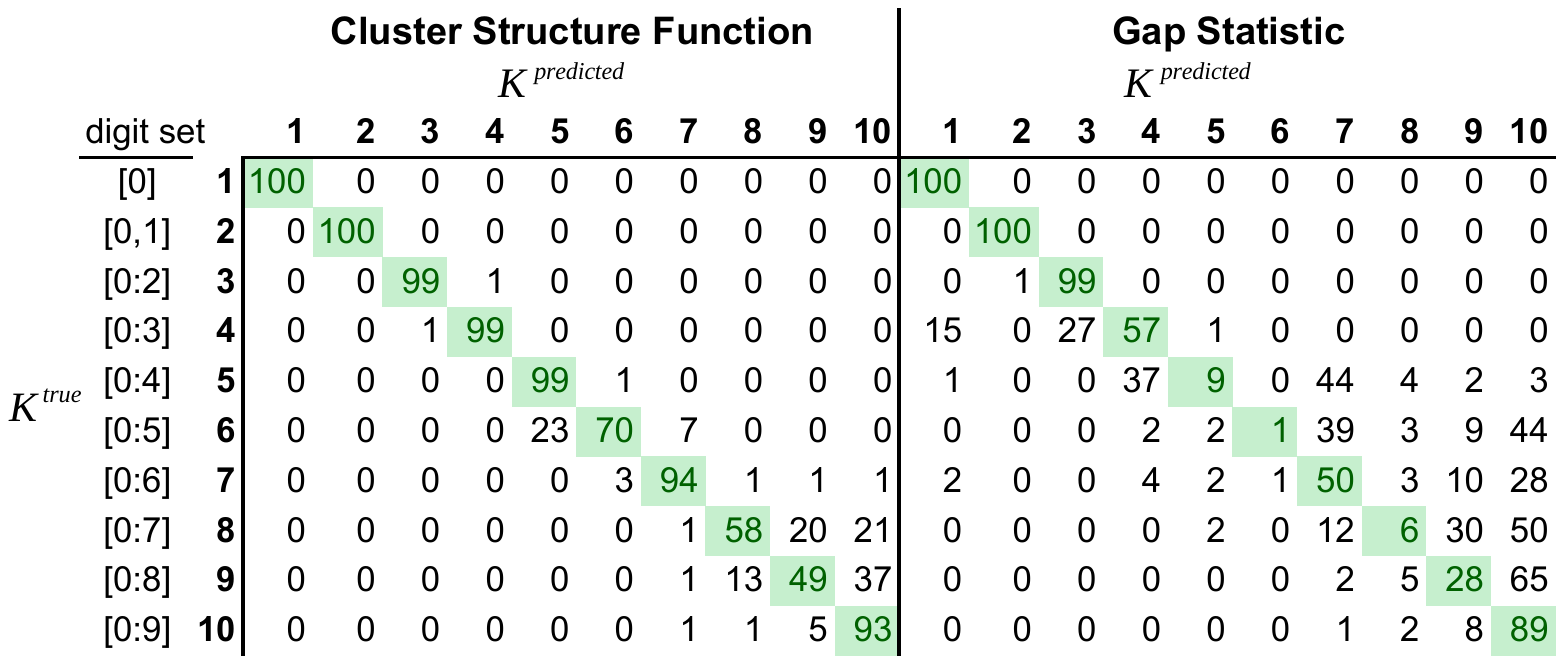"}
  \caption{\label{tbl.nistResultsCM} Supervised cluster structure function (CSF) (left) and Gap Statistic (right) estimates of the number of unique digits $K$ in a NIST digit set. Each digit set contains 100 digits, split equally among the $K$ digit classes. The algorithm is given a digit set sampler that can pull repeatedly from the same distribution ($K$ value) with the goal of estimating $K$. The results here were generated by classifying one hundred each of digit sets with $K^{true} \in [1..10]$. A 20-element vector consisting of mean and standard deviations of the CSF and the Gap Statistic was the input to a shallow feed-forward neural network. Overall accuracy for the CSF was 86\% [0.84,0.88] and 54\% for the Gap Statistic [0.51,0.57].
  }   
\end{table*}

\subsection{Cell Segmentation}
   
Cell segmentation is the identification of individual cells in microscopy images. 
The identification of cell nuclei in microscopy images is 
an important question.  Human stem cells (HSCs) are particularly challenging to segment as the cells are highly
adherent, forming in naturally densely packed colonies. HSC  colonies, or groups of touching cells, 
consist of dividing and differentiating cells that present a wide variety of sizes and shapes. 
The large morphological  variation arises from both the presence of cells in developmental 
states and the mechanical interaction among adjacent cells deforming their shape, texture, and 
behavior \cite{SCR15,TMI18}. Timelapse microscopy of living cells further complicates the problem, 
requiring reduced imaging energy to lessen phototoxicity, and also introducing temporal variations 
due to imaging as well as cell and colony
appearance variability. It is much easier to segment cells that all have a similar appearance,  for example shape and size. Here we present a technique for combining multiple simultaneous segmentations of the same image, each with varying 
underlying segmentation parameters. We refer to the collective set of segmentation results
as an ensemble. The segmentations in the ensemble are combined by using optimality deficiency to select among
overlapping segmentations. We use a previously described unsupervised underlying segmentation \cite{SCR15,BI16,TMI18} 
that takes a single parameter of cell size in $\mu m$. The method works as follows. The segmentations are run across a range of 
expected radius values. The results are combined, with cells that overlap each other placed in common "buckets".
The question is then to choose the optimal number of cells $K$ in each bucket. Every segmentation is given
a score based on its appearance and how well it captures the underlying pixels. Here we apply the approach to the question
of identifying elliptical cells or nuclei. Rather than using compression-based similarity, the score is built on an 
appearance model. 

The segmentation model expects cells that are convex, brighter in the interior 
compared to the exterior, and to contain a well defined boundary between a bright interior and dark exterior. 
Given a particular cell segmentation $C$, the score is a combined measure of convex efficiency, background
efficiency and boundary efficiency. The term efficiency describes a normalized measure capturing how close
to the model the data achieves. The convex efficiency is defined as 
\[
e_{convex}(C)=\frac{|C|}{|C_{convex}|},
\]
where $|C|$ is the area (volume) of segmentation $C$ and $|C_{convex}|$ is the area of the convex hull of $C$.
The boundary efficiency is computed from the normalized ([0,1]) image pixel values, defined as
\[
e_{boundary}(C)=1-mean(R(\beta(C))-T(\beta(C))),
\]
where $R(\beta(C))$ is the maximal intensity in the region surrounding the boundary voxels $\beta(C)$, and $T(\beta(C))$ 
is the mean adaptive threshold value for voxels along the boundary. The background efficiency is defined as
\[
e_{background}(C)=\frac{mean( I(C)-T(C) )}{mean(I(\hat{C})-T(\hat{C}))},
\]
where $I(C)$ is the source image, $T(C)$ is the adaptive threshold image of segmentation $C$, and $\hat{C}$ represents the 
image background. The final segmentation score is the sum of the three scores, 
\begin{equation} \label{eq.cellScore}
e_C=e_{convex}(C)+e_{boundary}(C)+e_{background}(C).
\end{equation}
After each cell has been scored, the goal is to select the set of non-overlapping segmentations from the ensemble that maximize
the sum of the individual segmentation scores. This is equivalent to selecting  
the $H_S(k)$ from Equation \ref{eq.H}  
where the $\delta(A,x)$ in Equation \ref{eq.delta} are approximated by the individual cell segmentation scores. Figure \ref{fig:ensembleSegmentation} demonstrates
the ensemble segmentation for a colony of HSCs imaged using a fluorescent nuclear marker (H2B). 

Quantitative validation for the ensemble segmentation approach was done using ground truth 
data from the cell tracking challenge \cite{CTC} reference datasets. Twelve time-lapse datasets in 2-D and 3-D of
live cells were processed using the ensemble segmentation with an empirically selected range of radius parameters. 
Ground truth scores were obtained for each radius parameter setting run separately and also for the ensemble segmentation.
We consider the detection (DET) score here, as our concern is not primarily the accuracy of pixel assigned to each 
segmentation, but rather that we detect the correct number of cells in each frame. We use the training
movies for validation because our method is unsupervised and training is not required. Our results are competitive
on these movies with the supervised algorithms evaluated on the testing challenge datasets.
In each of the 12 movies, the ensemble segmentation outperformed the best result selected from 
segmentations run separately. The results for the optimality deficiency based ensemble segmentation 
were statistically significantly better compared to the best score obtained from the single radius 
segmentation data for both the detection (DET) ($p=5e-4$, Wilcoxon paired sign-rank test) 
and tracking (TRA) scores ($p=2e-3$). This is significant because the best radius result varied even within
pairs of movies from the same application type, showing the value of the ensemble segmentation approach. 
Table \ref{resultsSegmentation} shows the results for the ensemble classification as well as the best and worst performing individual segmentation for each of the datasets processed here.

\begin{figure*}[]
\includegraphics[width=1\textwidth]{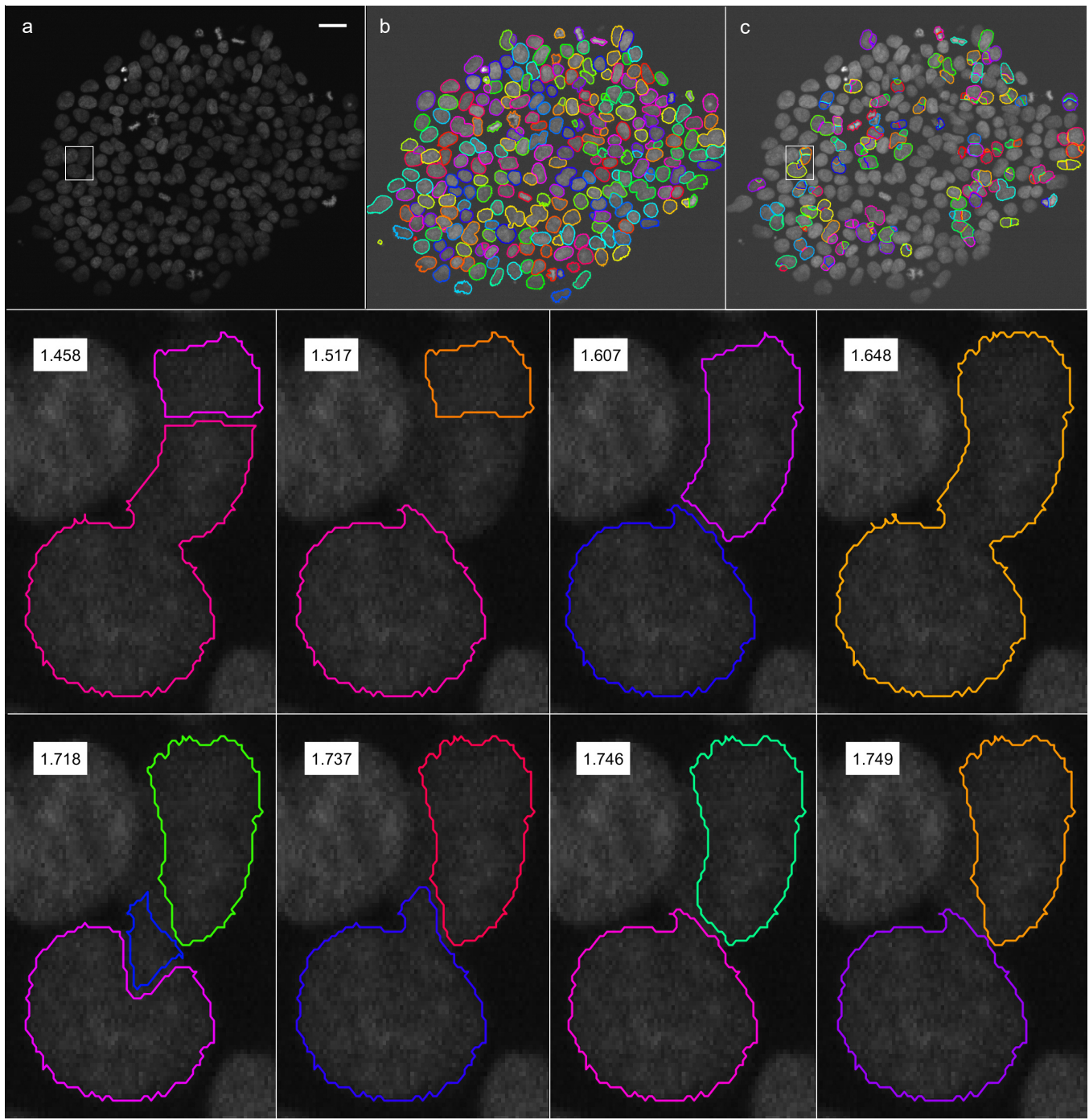}
\caption{\label{fig:ensembleSegmentation} The ensemble segmentation combines results from different segmentation algorithms using
the optimality deficiency to select the best results for overlapping segmentations. Frame segmentations are run at each of a 
range of different parameter values. The resulting segmentations
are each treated as a possible clustering of the underlying pixels into objects. An example is shown here for a single 
image frame taken from a 1200 frame movie showing the development of live human stem cells (HSCs). The top row shows 
a raw image (a), the final segmentation results (b) and the overlapping ensemble regions (c). The bottom two rows show 
different possible combinations of segmentation results from the region shown in the rectangle in (a) and (c). The segmentation results
are scored from worst (lowest score) to best (highest score). The optimal set of segmentation results are selected using a greedy optimization to 
maximize the scores in each overlapping region. Segmentation scores are generated from the convexity, boundary and
background efficiencies.  } 
\end{figure*} 
 
\begin{table*}[]
  \centering
  \includegraphics[]{"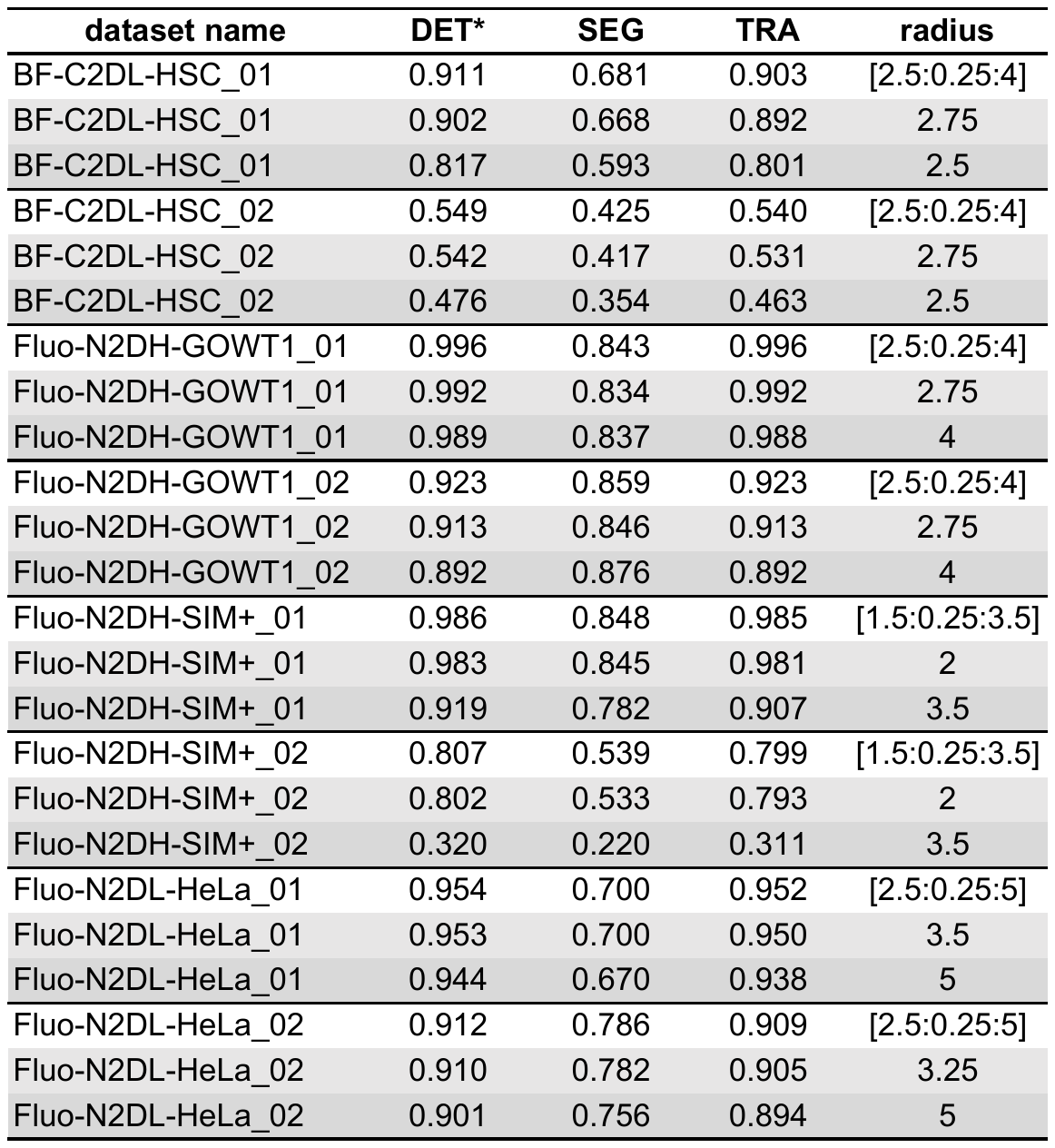"}
  \caption{\label{resultsSegmentation} Ensemble segmentation combines results from segmentation algorithms run at different parameter settings on 2-D and 3-D image data. Optimality deficiency estimates  the number of cells $K$ in each region of overlapping segmentations. The approach here is optimizing the detection (DET) metric for the cell tracking challenge datasets. The first row in each group shows the ensemble results and radius parameter settings, the subsequent two rows show the best and worst performing single segmentations. The ensemble segmentation significantly outperforms the best individual segmentations ($p=5e-4$). 
  } 
\end{table*}

\subsection{Synthetic Dataset}
We evaluate the performance of the cluster structure function using synthetic data generated as random points from $K=3$ different 2-D standard normal distributions, each with covariance $\Sigma=[1,0;0,1]$. Position the $K=3$ clusters along the x-axis at $x=[0,r,2*r]$ with cluster spacing $r=[0.5:0.25:1.5]$. In each of the 100 trials, generate 1e4 points from each of the $K=3$ distributions. Supplementary Figure 1 shows a histogram of an example synthetic dataset with cluster spacing $= 1.0$. To evaluate the cluster structure function, approximate $K(A)-K(x)$, as in eqn. \ref{eq.delta} using the Euclidean distance between point $x$ and the centroid of cluster $A$. As in the examples above, we include only the points that fall within one standard deviation of the centroid for each cluster and then average this result across each cluster. We estimate the value of $K$ using the cluster structure function and compare to results from  the Gap statistic, the Akaike Information Criteria (AIC) and the Bayesian Information Criteria (BIC) \cite{TK08}.  The CSF performed significantly better compared to all three alternatives, with the AIC the next closest.  The AIC was the only alternative that was competitive with the CSF for this application. Fig. \ref{resultsSynthetic} shows results for the CSF and AIC.  The good performance of the cluster structure function here follows from the optimality of Euclidean distance used to estimate $K(A)-K(x)$ as in eqn. \ref{eq.delta}.


\begin{figure}
  \includegraphics{"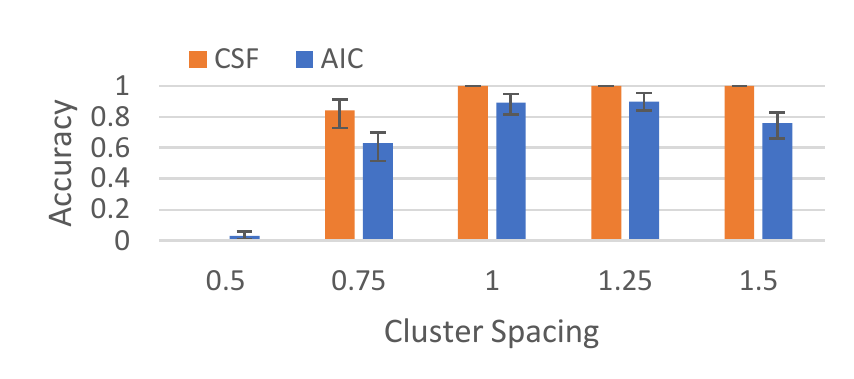"}
  \caption{\label{resultsSynthetic} Estimating the number of clusters in data generated from $K=3$ normal distributions, all with $\Sigma=[1,0;0,1]$. The distributions are located along the X axis at multiples $[0,1,2].*$ Cluster Spacing. The cluster structure function (CSF) significantly outperforms the Akaike Information Criteria (AIC). Error bars show 95\% confidence intervals from bootstrapping. 
   }
\end{figure}

\color{black}
\section{Source Code Availability}
All of the source code used to generate results in this paper is available open source from 
\url{https://git-bioimage.coe.drexel.edu/opensource/ncd}. This includes MATLAB implementations of the NCD and clustering algorithms. There is also limited support for a Python implementation, with ongoing development on that task. The ensemble segmentation algorithms are available at \url{https://leverjs.net/git}. 

\section{Acknowledgements}
Portions of this work were supported by NIH NIA (R01AG041861) and by the Human Frontiers Science Program (RGP0043/2019-203). The authors wish to thank Prof. Rafael Carazo Salas from the Univ. of Bristol UK and his group for providing sample HSC image data.

\bibliographystyle{plain}

\begin{IEEEbiography}
  [{\includegraphics[width=1in,height=1.25in,clip,keepaspectratio]
  {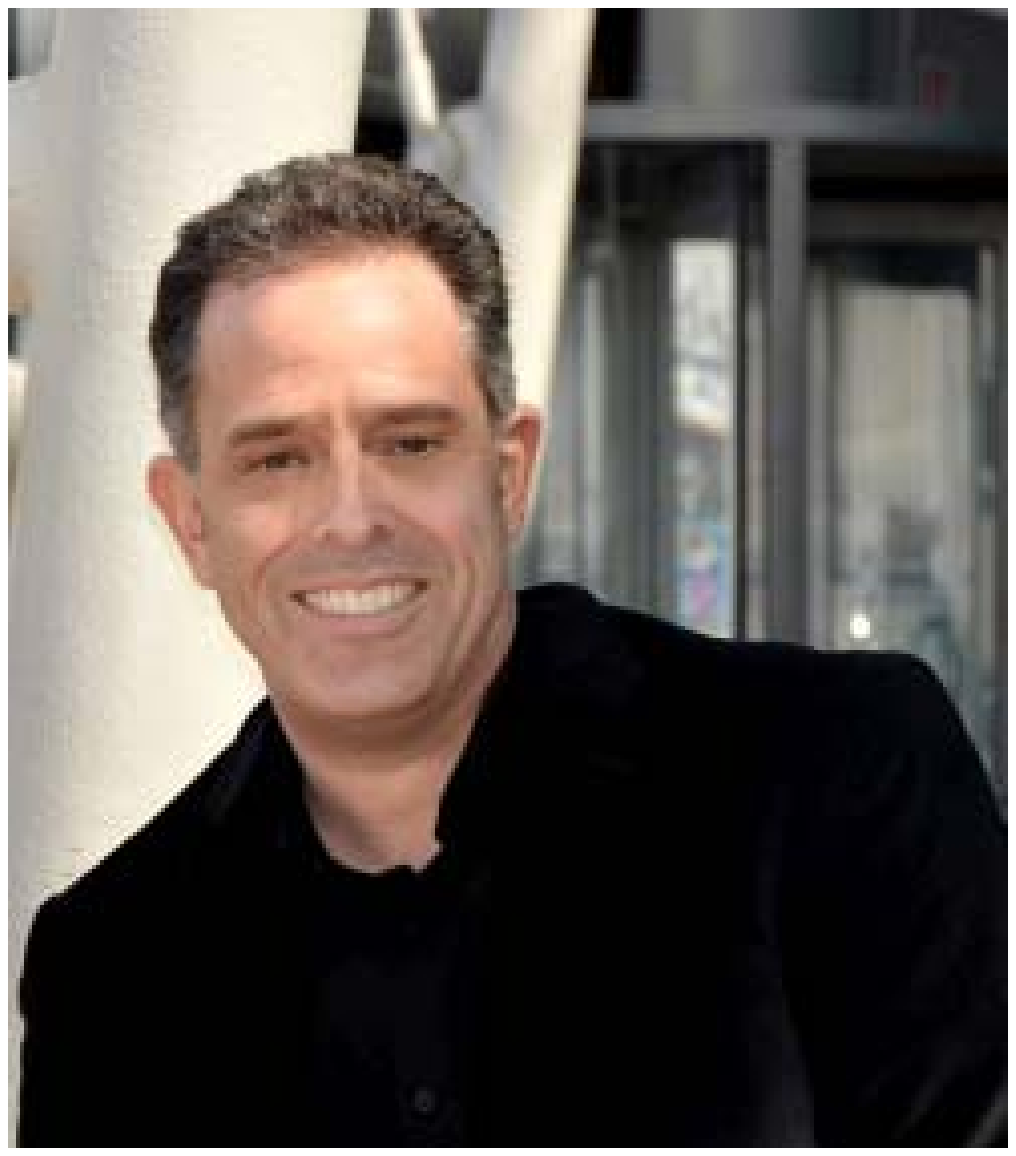}}]{Andrew R. Cohen} received his Ph.D. from the Rensselaer Polytechnic
  Institute in May 2008. He is currently an associate professor in the department of Electrical \& Computer Engineering at Drexel University. Prior to joining Drexel, he was an assistant professor in the department of Electrical Engineering and Computer Science at the University of Wisconsin, Milwaukee. He has worked as a software design engineer at Microsoft Corp. on the Windows and DirectX teams and as a CPU Product Engineer at Intel Corp. His research interests include 5-D image sequence analysis for applications in biological microscopy, algorithmic information theory, spectral methods, data visualization, and supercomputer applications. He is a senior member of the IEEE.
  \end{IEEEbiography}
  
  \begin{IEEEbiography}
  [{\includegraphics[width=1in,height=1.25in,clip,keepaspectratio]
  {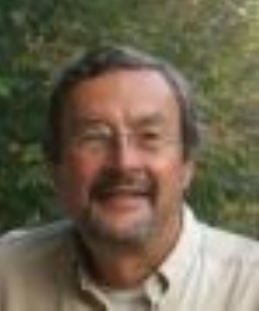}}]{Paul M.B. Vit\'anyi} received his Ph.D. from the Free University
  of Amsterdam (1978). He is a CWI Fellow at
  the national research institute for mathematics and computer
  science in the Netherlands, CWI,
  and Professor of Computer Science
  at the University of Amsterdam.  He served on the editorial boards
  of Distributed Computing, Information Processing Letters,
  Theory of Computing Systems, Parallel Processing Letters,
  International journal of Foundations of Computer Science,
  Entropy, Information,
  Journal of Computer and Systems Sciences (guest editor),
  and elsewhere. He has worked on cellular automata,
  computational complexity, distributed and parallel computing,
  machine learning and prediction, physics of computation,
  Kolmogorov complexity, information theory, quantum computing, publishing
  more than 200 research papers and some books. He received a Knighthood
  (Ridder in de Orde van de Nederlandse Leeuw) and is member of the
  Academia Europaea. Together with Ming Li
  they pioneered applications of Kolmogorov complexity
  and co-authored ``An Introduction to Kolmogorov Complexity
  and its Applications,'' Springer-Verlag, New York, 1993 (3rd Edition 2008),
  parts of which have been translated into Chinese,  Russian and Japanese.

  \end{IEEEbiography}

  \newpage
\renewcommand{\figurename}{Supplementary Figure}
\onecolumn
\setcounter{figure}{0} 
  \begin{figure}[H]
    \centering
    \includegraphics{"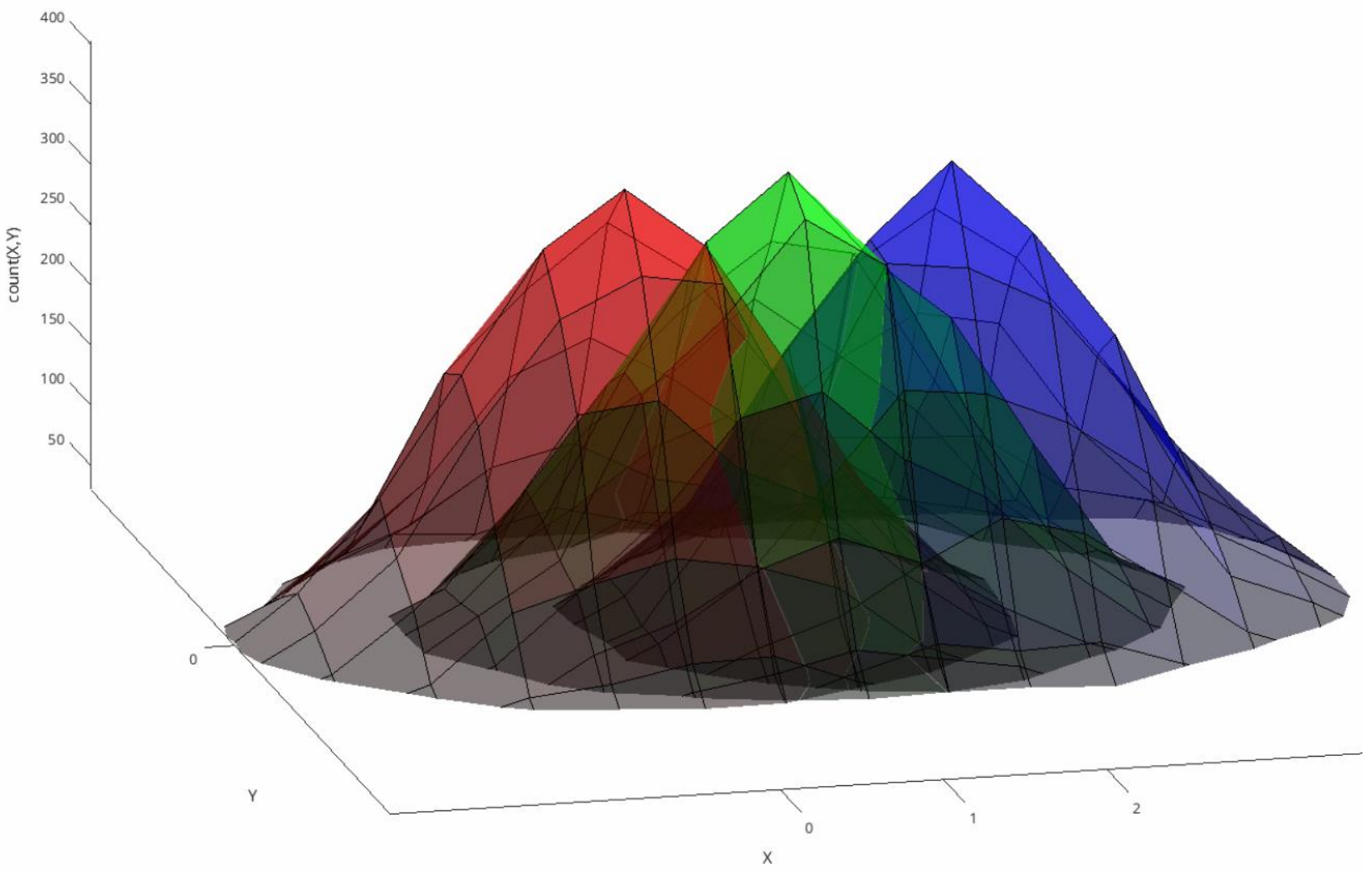"}
  \caption{\label{gtSynthetic} Histogram for synthetic dataset containing 3 clusters. Here cluster spacing equals $1.0$. Each cluster is shown in red, green, blue. The data was generated from a mixture of 3 normal distribution with means $\mu=[0,1,2]$ and identical covariance $\Sigma=[1,0;0,1]$. }
\end{figure}

\end{document}